\documentclass[conference]{IEEEtran}
\IEEEoverridecommandlockouts
% The preceding line is only needed to identify funding in the first footnote. If that is unneeded, please comment it out.
%Template version as of 6/27/2024

\usepackage{cite}
\usepackage{amsmath,amssymb,amsfonts,amsthm}
\usepackage{algorithmic}
\usepackage{graphicx}
\usepackage{textcomp}
\usepackage{xcolor}

\usepackage[hyphens]{url}      % URL 줄바꿈
\urlstyle{same}
\usepackage[hidelinks]{hyperref}% 하이퍼링크(색/테두리 숨김)
% URL 줄바꿈 위치 추가

\Urlmuskip=0mu plus 1mu\relax   % URL 사이 공백 늘림 허용(overflow 방지)

\usepackage[caption=false,font=footnotesize]{subfig}

\usepackage[small]{caption}
\usepackage{booktabs}
\usepackage{algorithm}

\usepackage{multirow}

\def\BibTeX{{\rm B\kern-.05em{\sc i\kern-.025em b}\kern-.08em
    T\kern-.1667em\lower.7ex\hbox{E}\kern-.125emX}}

\newtheorem{theorem}{Theorem}

\newtheorem{lemma}[theorem]{Lemma}

\begin{document}

\title{TSGM: Regular and Irregular Time-series Generation using Score-based Generative Models
}

\author{
\IEEEauthorblockN{Haksoo Lim}
\IEEEauthorblockA{
\textit{KAIST}\\
Seoul, Korea, Republic of\\
limhaksoo96@kaist.ac.kr}
\and
\IEEEauthorblockN{Jaehoon Lee}
\IEEEauthorblockA{\textit{LG AI Research}\\
Seoul, Korea, Republic of\\
jaehoon.lee@lgresearch.ai}
\and
\IEEEauthorblockN{Sewon Park}
\IEEEauthorblockA{\textit{Samsung SDS}\\
Seoul, Korea, Republic of\\
sw0413.park@samsung.com}
\and
\IEEEauthorblockN{Minjung Kim}
\IEEEauthorblockA{\textit{Samsung SDS}\\
Seoul, Korea, Republic of\\
mj100.kim@samsung.com}
\and
\IEEEauthorblockN{Noseong Park}
\IEEEauthorblockA{\textit{KAIST}\\
Daejeon, Korea, Republic of\\
noseong@kaist.ac.kr}
}

\maketitle

\begin{abstract}
Score-based generative models (SGMs) have demonstrated unparalleled sampling quality and diversity in numerous fields, such as image generation, voice synthesis, and tabular data synthesis, etc. Inspired by those outstanding results, we apply SGMs to synthesize time-series by learning its conditional score function. To this end, we present a conditional score network for time-series synthesis, deriving a denoising score matching loss tailored for our purposes. In particular, our presented denoising score matching loss is the conditional denoising score matching loss for time-series synthesis. In addition, our framework is such flexible that both regular and irregular time-series can be synthesized with minimal changes to our model design. Finally, we obtain exceptional synthesis performance on various time-series datasets, achieving state-of-the-art sampling diversity and quality.
\end{abstract}

\begin{IEEEkeywords}
diffusion models, time-series generation.
\end{IEEEkeywords}

\section{Introduction}
Time-series frequently occurs in our daily lives, e.g., stock data, climate data, and so on. Especially, time-series forecasting and classification are popular research topics in the field of machine learning~\cite{ahmed2010timeforecast,fu2011time}. 
In many cases, however, time-series samples are incomplete and/or the number of samples is insufficient, in which case training machine learning models cannot be fulfilled in a robust way. To overcome the limitation, time-series synthesis has been studied actively recently~\cite{https://doi.org/10.48550/arxiv.1806.07366,10.1007/978-3-030-59137-3_34}. These synthesis models have been designed in various ways, including variational autoencoders (VAEs) and generative adversarial networks (GANs)~\cite{desai2021timevae,yoon2019timegan,https://doi.org/10.48550/arxiv.2210.02040}. 

Moreover, real-world time series often inevitably contain missing values because of privacy reasons or medical events. It has been noted that the missing values involve important information, called \textit{informative missingness}~\cite{a6ce2314-039c-31b2-80e5-de4ad323600d}. To remedy the problem, several works have been developed to deal with irregular time-series, i.e., the inter-arrival time between observations is not fixed and/or some observations can be missing~\cite{Schafer2002MissingDO,Che2016RecurrentNN,DBLP:journals/corr/abs-2005-08926}, in which case synthesizing irregular time series is challenging~\cite{https://doi.org/10.48550/arxiv.2210.02040}.

Score-based generative models (SGMs) have shown good sampling quality and diversity in numerous fields, such as image generation, voice synthesis, and tabular data synthesis, etc~\cite{yang2023diffusion}. However, in time-series generation, SGMs should consider \textit{autoregressiveness}, meaning each time-series observation is generated in consideration of its previously generated observations which makes time-series generation more difficult~\cite{yoon2019timegan,https://doi.org/10.48550/arxiv.2210.02040}. 
To this end, we propose the method of \underline{\textbf{T}}ime-series generation using conditional \underline{\textbf{S}}core-based \underline{\textbf{G}}enerative \underline{\textbf{M}}odel (TSGM), which consists of three neural networks, i.e., an encoder, a score network, and a decoder (see Figure~\ref{fig1}). % (see Figure~\ref{fig1} and Section~\ref{experience}). 

\begin{table}
        % \vspace{1.5em}
        \footnotesize
        \centering
        \setlength{\linewidth}{\textwidth}
        \begin{tabular}{c|cc|cc|cc}
        \hline
            \multirow{3}{*}{Method} & \multicolumn{6}{c}{Olympic Rankings} \\ \cline{2-7}
            & \multicolumn{2}{c|}{Gold} & \multicolumn{2}{c|}{Silver} & \multicolumn{2}{c}{Bronze} \\ \cline{2-7}
            % \multicolumn{3}{c|}{Regular} & \multicolumn{3}{c}{Irregular} \\ \hline
             & Regular & Irregular & R  & I & R & I \\ \hline
        TSGM-VP & 3 & 9 & 3 & 9 & 1 & 4 \\ 
        TSGM-subVP & 5 & 14 & 1 & 6 & 1 & 3 \\ \hline
        TimeGAN & 0 & 0 & 0 & 0 & 1 & 0 \\ 
        TimeVAE & 0 & 0 & 0 & 0 & 1 & 1 \\ 
        GT-GAN & 0 & 1 & 0 & 2 & 2 & 4 \\
        KoVAE & 2 & 4 & 1 & 8 & 2 & 9 \\
        Diffusion-TS & 5 & 1 & 1 & 0 & 2 & 0 \\ \hline
        \end{tabular}
\caption{The table illustrates how many medals each method gets across all datasets and evaluation metrics, based on the generation evaluation scores presented in Table~\ref{table2_1} and Table~\ref{table10}. %Table~\ref{table2}, Table~\ref{table2-1}, and Table~\ref{table10}  
Our method with the two specific types, TSGM-VP and TSGM-subVP, achieves superior generation performance compared to baselines.}
\label{tbl:summary}
\end{table}

\begin{table}
\centering
\includegraphics[width=1\linewidth]{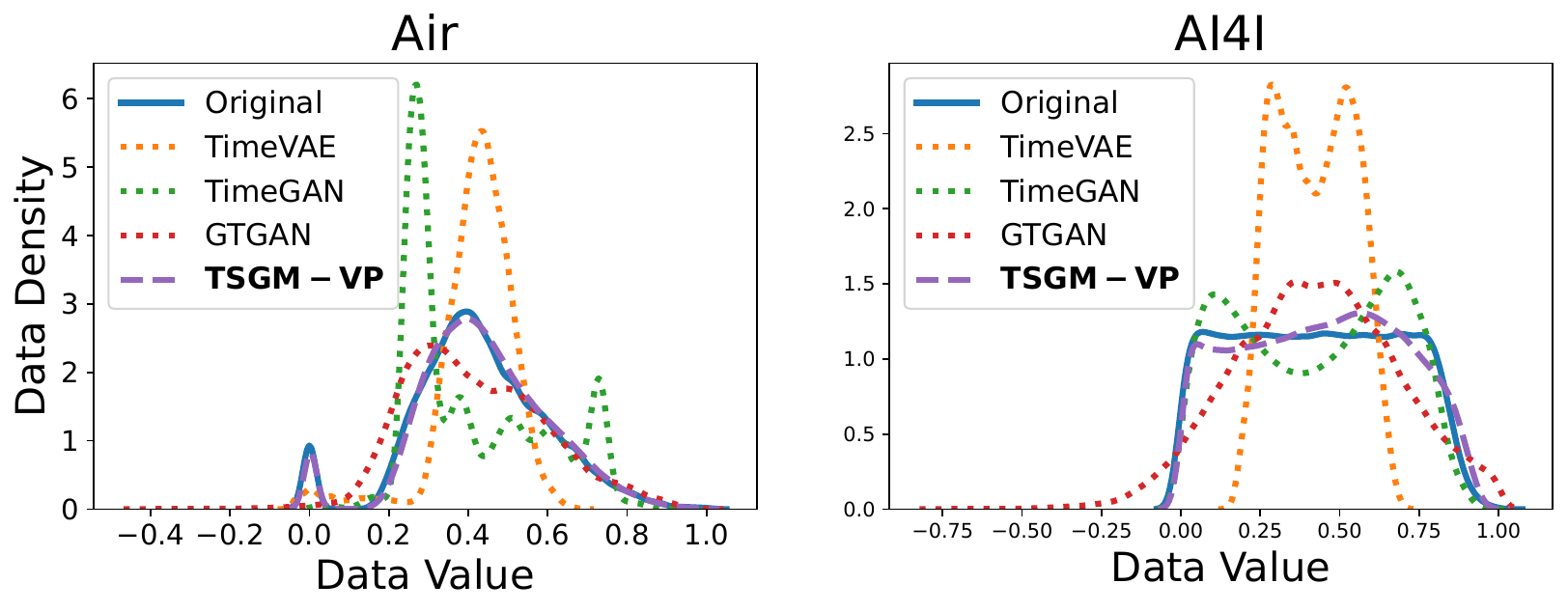}
    % \vspace{1.5em}
\captionof{figure}{The KDE plots show the estimated distributions of original data and ones generated by several methods in the Air and AI4I datasets  --- we ignore time stamps for drawing these distributions. Unlike baseline methods, the distribution of TSGM-VP is almost identical to the original one. These figures provide an evidence of the excellent generation quality and diversity of our method. For TSGM-subVP, similar results are observed.}
\label{fig:kde}
\end{table}

\paragraph{Score-based time-series synthesis} 
SGMs have its potential to deal with autoregressiveness~\cite{DBLP:journals/corr/abs-2010-12810}. We design our own autoregressive denoising score matching loss for time-series generation and prove its correctness (see Section~\ref{autosgm}). Along with its performances from image generation domain (cf. Fig.~\ref{fig:kde}), here is our main motivation of using SGMs: 

\textit{The mathematical structure of SGMs naturally aligns with the sequential nature of time-series data}.

Besides, we design a conditional score network on time-series, which learns the gradient of the conditional log-likelihood.

\paragraph{Regular vs. irregular time-series synthesis} 
Time-series are prevalent throughout our daily lives. Although regular time-series are easy to be considered, it is common to have missing values because of privacy issue~\cite{Che2016RecurrentNN,DBLP:journals/corr/abs-2005-08926}, which makes irregularly sampled time-series\footnote{For example, Physionet~\cite{PhysioNet}, a famous dataset for time series classification, deliberately removed 90\% of observations to protect the privacy of patients, posing challenges for learning and analysis. The synthesized time series can be used instead.}. This hinders time-series processing and we consider the hardest problem condition: \textit{providing complete time-series given not only regular time-series, but also irregular ones}. To this end, our method is considered `universial' in that both both regular and irregular time-series samples can be treated with minimal changes to our model design. For synthesizing regular time series, we use a recurrent neural network-based encoder and decoder. Continuous-time methods, such as neural controlled differential equations~\cite{DBLP:journals/corr/abs-2005-08926} and GRU-ODE~\cite{DBLP:journals/corr/abs-1905-12374}, can be used as our encoder and decoder for synthesizing irregular time series (see Section~\ref{sec:enc}).

We conduct in-depth experiments with 4 real-world datasets under regular and irregular settings. To be specific, for the irregular settings, we randomly drop 30\%, 50\%, and 70\% of observations from regular time-series on training, then generate complete time-series on evaluation. Therefore, we test with 16 different settings, i.e., 4 datasets for one regular and three irregular settings. Our specific choices of 11 baselines include almost all existing types of time-series generative paradigms, ranging from VAEs to GANs and diffusion model. In Table~\ref{tbl:summary} and Figure~\ref{fig:kde}, we compare our method to the baselines, ranking methods by their evaluation scores and estimating data distribution by kernel density estimation (KDE). We also visualize real and generated time-series samples onto a latent space using t-SNE~\cite{maaten2008tsne} in Figure~\ref{fig2}. Our proposed method shows the best generation quality in most of the cases. Furthermore, the t-SNE and KDE visualization results provide intuitive evidence that our method's generation diversity is also superior to that of the baselines. Our contributions are summarized as follows:

\begin{enumerate}
    \item We, for the first time, propose an SGM-based universal time-series synthesis method.
    \item We derive our own denoising score matching loss considering the autoregressive nature of sequential data, connecting SGMs to time-series generation domain.
    \item We conduct comprehensive experiments with 4 real-world datasets and 11 baselines under one regular and three irregular settings since our method supports both regular and irregular time-series. Overall, our proposed method shows the best generation quality and diversity.
\end{enumerate}

\section{Related Work and Preliminaries}

\subsection{Score-based Generative Models}\label{sec:SGM}
SGMs offer several advantages over other generative models, including their higher generation quality and diversity. SGMs follow a two-step process, wherein i) gaussian noises are continuously added to a sample and ii) then removed to recover a new sample. These processes are known as the forward and reverse processes, respectively. In this section, we provide a brief overview of the original SGMs in~\cite{song2021SDE}, which will be adapted for the time-series generation tasks.

\subsubsection{Forward and Reverse Process}

At first, SGMs add noises with the following stochastic differential equation (SDE):
\begin{equation*}
    d\textbf{x}^s=\textbf{f}(s,\textbf{x}^s)ds+g(s)d\textbf{w},  s \in [0,1],
\end{equation*}
where $\textbf{w} \in \mathbb{R}^{\dim(\textbf{x})}$ is a multi-dimensional Brownian motion, $\textbf{f}(s,\cdot): \mathbb{R}^{\dim(\textbf{x})} \rightarrow \mathbb{R}^{\dim(\textbf{x})}$ is a vector-valued drift term, and $g : [0,1] \rightarrow \mathbb{R}$ is a scalar-valued diffusion function. Hereafter, we define $\textbf{x}^s$ as a noisy sample diffused at time $s \in [0,1]$ from an original sample $\textbf{x} \in \mathbb{R}^{\dim(\textbf{x})}$. Therefore, $\textbf{x}^s$ can be understood as a stochastic process following the SDE.

There are several options for $\textbf{f}$ and $g$: variance exploding(VE), variance preserving(VP), and subVP. It is proved that VE and VP are continuous generalizations of the two discrete diffusion methods~\cite{song2021SDE}: NCSN and DDPM in~\cite{song2019smld,sohl2015ddpm}.
Also, the subVP method shows, in general, better negative log-likelihood (NLL)~\cite{song2021SDE}. We describe the exact form of each SDE in Table~\ref{tbl:driftanddiffusion}.

Note that we only use the subVP-based TSGM in our main experiments and exclude the VE and VP-based one for its inferiority for time series synthesis in our experiments, but checked the VP-based method from ablation study for its better performances than that of the VE.

SGMs run the forward SDE with a sufficiently large number of steps to make sure that the diffused sample converges to a Gaussian distribution at the final step. The score network $M_{\theta}(s,\textbf{x}^s)$ learns the gradient of the log-likelihood $\nabla_{\textbf{x}^s} \log p(\textbf{x}^s)$, which will be used in the reverse process.

For the forward SDE, there exists the following corresponding reverse SDE~\cite{anderson1982reverse}:
\begin{equation*}
    d\textbf{x}^s=[\textbf{f}(s,\textbf{x}^s)-g^2(s)\nabla_{\textbf{x}^s}{\log p(\textbf{x}^s)}]ds+g(s)d\bar{\textbf{w}}.
\end{equation*}

The formula suggests that if knowing the score function, $\nabla_{\textbf{x}^s}{\log p(\textbf{x}^s)}$, we can recover real samples from the prior distribution $p_1(\textbf{x}) \sim \mathcal{N}(\mu, \sigma^2)$, where $\mu, \sigma$ vary depending on the forward SDE type.

\begin{table}
\small
\centering
\begin{tabular}{c|c|c}
    \toprule
    SDE & drift ($\textbf{f}$) & diffusion ($g$) \\
    \midrule
     VE    & 0 & $\sqrt{\frac{d\sigma^2(s)}{ds}}$ \\
     VP & $-\frac{1}{2}\beta(s)\textbf{x}^s$ & $\sqrt{\beta(s)}$ \\    
     subVP  & $-\frac{1}{2}\beta(s)\textbf{x}^s$ & $\sqrt{\beta(s)(1-e^{-2\int_0^s\beta(t)dt})}$ \\
    \bottomrule
\end{tabular}
% \vskip -0.1in
\caption{Comparison of drift and diffusion terms. $\sigma(s)$ means positive noise values which are increasing, and $\beta(s)$ denotes noise values in [0,1], which are used in SMLD~\cite{song2019smld} and DDPM~\cite{ho2020ddpm}.}\label{tbl:driftanddiffusion}
\end{table}

\subsubsection{Training and Sampling}

In order for the model $M$ to learn the score function, the model has to optimize the following loss function:
\begin{equation*}
L(\theta) = \mathbb{E}_{s}\{\lambda(s)\mathbb{E}_{\textbf{x}^s}[\left\|M_{\theta}(s,\textbf{x}^s)-{\nabla}_{\textbf{x}^s}\log p(\textbf{x}^s)\right\|_2^2]\},
\end{equation*}where $s$ is uniformly sampled over $[0,1]$ with an appropriate weight function $\lambda(s):[0,1]\rightarrow \mathbb{R}$. However, using the above formula is computationally prohibitive~\cite{JMLR:v6:hyvarinen05a,DBLP:journals/corr/abs-1905-07088}. It was found that the loss can be substituted with the following denoising score matching loss~\cite{vincent2011matching}:
\begin{equation*}
L^*(\theta)=\mathbb{E}_{s}\{\lambda(s)\mathbb{E}_{\textbf{x}^0}\mathbb{E}_{\textbf{x}^s|\textbf{x}^0}[\left\|M_{\theta}(s,\textbf{x}^s)-{\nabla}_{\textbf{x}^s}\log p(\textbf{x}^s|\textbf{x}^0)\right\|_2^2]\}.
\end{equation*}
Since SGMs use an affine drift term, the transition kernel $\text{p}(\textbf{x}^s|\textbf{x}^0)$ follows a certain Gaussian distribution~\cite{sarkka2019applied} and therefore, ${\nabla}_{\textbf{x}^s}\log p(\textbf{x}^s|\textbf{x}^0)$ can be analytically calculated.

\subsubsection{Autoregressiveness of Diffusion Model}\label{autosgm}

Let $\textbf{x}_{1:N}$ be a time-series sample which consists of $N$ observations. 
In order to synthesize time-series $\textbf{x}_{1:N}$, unlike other generation tasks, we must consider autoregressiveness: generating each observation $\textbf{x}_n$ at sequential order $n \in \{2,...,N\}$ considering its previous history $\textbf{x}_{1:n-1}$. One can train neural networks to learn the conditional likelihood $\text{p}(\textbf{x}_n|\textbf{x}_{1:n-1})$ and generate each $\textbf{x}_n$ recursively using it, depicting so-called autoregressive property of time-series domain.

Up to our survey, there's no paper about diffusion models considering autoregressiveness in time-series generation. Instead of it, A researcher dealt with a more generalized case, autoregressive conditional score matching to make its score model, $M_{\theta}(s,\cdot,\cdot)$, to be $M_{\theta}(s,\textbf{x}_{n}^s,\textbf{x}_{1:n-1}^0)\sim{\nabla}_{\textbf{x}_{n}^s}\log p(\textbf{x}_{n}^s|\textbf{x}_{1:n-1}^0)$~\cite{DBLP:journals/corr/abs-2010-12810}. Inspired by the previous work, we enlighten a connection between SGMs and time-series generation by deriving an autoregressive denoising score matching (see Theorem~\ref{thm1}). We emphasize that TSGM is fundamentally different in that (i) TSGM uses its own autoregressive denoising score matching loss,  (ii) as a result, it optimizes not ${\nabla}_{\textbf{x}_{n}^s}\log p(\textbf{x}_{n}^s|\textbf{x}_{1:n-1}^0)$ but ${\nabla}_{\textbf{x}_{1:n}^s}\log p(\textbf{x}_{1:n}^s|\textbf{x}_{1:n-1}^0)$, which is fit for its RNN-based (markovian) encoder and decoder.

\vspace{-0.5em}

\subsection{Time-series Generation}\label{sec:time_series_generation}

There are several time-series generation papers, and we introduce their ideas. TimeVAE~\cite{desai2021timevae} is a variational autoencoder to synthesize time-series data. This model can provide interpretable results by reflecting temporal structures such as trend and seasonality in the generation process. 
CTFP~\cite{NEURIPS2020_58c54802} is a well-known normalizing flow model. It can treat both regular and irreugular time-series data by a deformation of the standard Wiener process.

TimeGAN~\cite{yoon2019timegan} uses a GAN architecture to generate time-series. First, it trains an encoder and decoder, which transform a time-series sample $\textbf{x}_{1:N}$ into latent vectors $\textbf{h}_{1:N}$ and recover them by using a recurrent neural network (RNN). Next, it trains a generator and discriminator pair on latent space, by minimizing the discrepancy between an estimated and true distribution, i.e. $\hat{p}(\textbf{x}_n|\textbf{x}_{1:n-1})$ and ${p}(\textbf{x}_n|\textbf{x}_{1:n-1})$. Since it uses an RNN-based encoder, it can efficiently learn the conditional likelihood ${p}(\textbf{x}_n|\textbf{x}_{1:n-1})$ by treating it as ${p}(\textbf{h}_n|\textbf{h}_{n-1})$, since $\textbf{h}_n\sim\textbf{x}_{1:n}$ under the regime of RNNs. Therefore, it can generate each observation $\textbf{x}_n$ considering its previous history $\textbf{x}_{1:n-1}$. However, GAN-based generative models are vulnerable to the issue of mode collapse~\cite{xiao2022tackling} and unstable behavior problems during training~\cite{chu2020unstable}. GT-GAN~\cite{https://doi.org/10.48550/arxiv.2210.02040} attempted to solve the problems by incorporating an invertible neural network-based generator into its framework. There also exist GAN-based methods to generate other types of sequential data, e.g., video, sound, etc~\cite{esteban2017rcgan,mogren2016crnngan,xu2020cotgan,donahue2019wavegan}. In our experiments, we also use them as our baselines for thorough evaluations.

\begin{figure*}[t]
\centering
\includegraphics[width=1\textwidth]{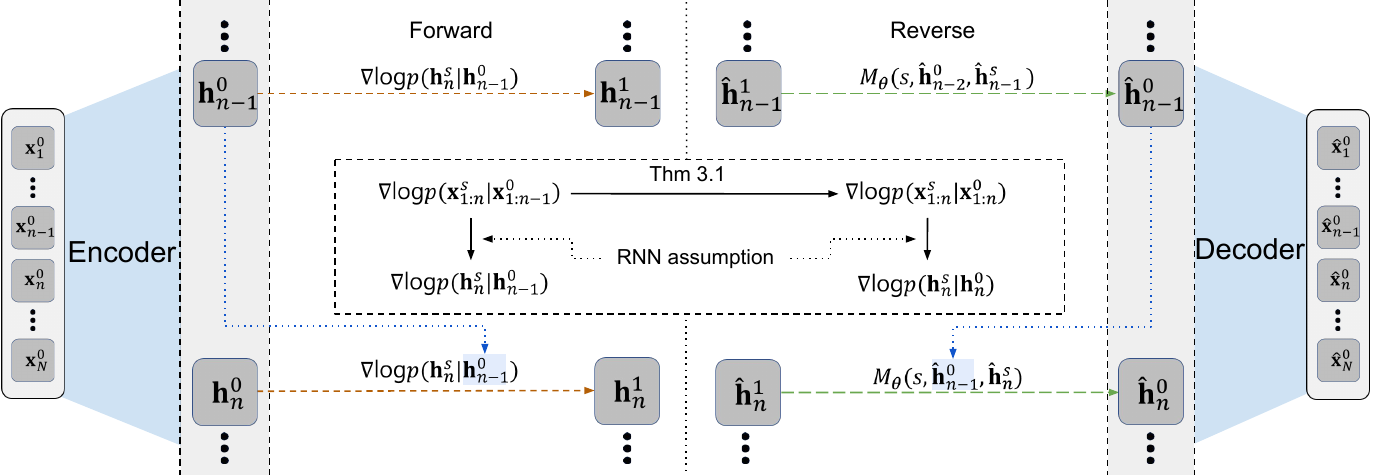}
% \vskip -0.01in
\caption{The overall workflow of TSGM (see Section~\ref{sec:train}). Our original learning objective is to approximate $\nabla\log p({\textbf{x}}_{1:n}^s|\textbf{x}_{1:n-1}^0)$, which is computationally prohibitive, with the conditional score network $M_{\theta}(s,{\textbf{x}}_{1:n}^s,\textbf{x}_{1:n-1}^0)$ using an MSE loss. We then prove in Thm.~\ref{thm1} that learning $\nabla\log p({\textbf{x}}_{1:n}^s|\textbf{x}_{1:n}^0)$ is equivalent to $\nabla\log p({\textbf{x}}_{1:n}^s|\textbf{x}_{1:n-1}^0)$ for $\theta$ of $M_{\theta}$ in the MSE loss, i.e., their optimal model parameter $\theta$ is identical. At the end, our score network $M_{\theta}(s,\textbf{h}_{n}^s,\textbf{h}_{n-1}^0)$ learns $\nabla\log p({\textbf{h}}_{n}^s|\textbf{h}_{n})$ since RNNs can encode $\textbf{x}_{1:n}^0$ and $\textbf{x}_{1:n-1}^0$ into their hidden states $\textbf{h}_{n}^0$ and $\textbf{h}_{n-1}^0$, respectively.}
\vspace{-1.5em}
\label{fig1}
\end{figure*}

\section{Proposed Method}

Our proposed TSGM consists of three networks: an encoder, a decoder, and a conditional score network (cf. Fig.~\ref{fig1}). Firstly, we train the encoder and the decoder to connect between time-series samples and a latent space. Next, using the pre-trained encoder and decoder, we train the conditional score network on the latent space. The conditional score network will be used for sampling fake time-series on the latent space.

\subsection{Problem Formulation}
Let $\mathcal{X}$ and $\mathcal{H}$ denote a data space and a latent space, respectively. We define $\mathbf{x}_{1:N}$ as a time-series sample with a sequential length of $N$, and $\mathbf{x}_n$ is a multi-dimensional observation of $\mathbf{x}_{1:N}$ at sequential order $n$. Similarly, $\mathbf{h}_{1:N}$ (resp. $\mathbf{h}_n$) denotes an embedded time series (resp. an embedded observation).

Each observation $\mathbf{x}_n$ can be represented as a pair of time and features, i.e., $\mathbf{x}_n=(t_n,\mathbf{u}(t_n))$, where $t_n\in\mathbb{R}_{\geq0}$ is a time stamp of feature $\mathbf{u}(t_n)\in{\mathbb{R}^{\dim(\mathbf{u})}}$, and $\dim(\mathbf{u})$ is a feature dimension.
$\mathcal{X}$ can be classified into two types: regular time-series and irregular time-series. For irregular setting, we randomly remove 30\%, 50\% and 70\% of samples from regular time-series. Therefore, the only difference between these types is whether time intervals, ${\{t_{n+1}-t_n\}}_{n=1}^{N-1}$, are the same or not.

Our irregular setting considers generating complete time-series given training data with missing values. Consequently, we generate total time-series on both regular and irregular settings and compare the generated ones with the original regular data on evaluation process.

\subsection{Encoder and Decoder}\label{sec:enc}
The encoder and decoder have the task of mapping time-series data to a latent space and vice versa.
We define $e$ and $d$ as an encoding function mapping $\mathcal{X}$ to $\mathcal{H}$ and a decoding function mapping $\mathcal{H}$ to $\mathcal{X}$, respectively. For simplicity but without loss of generality, we utilize an autoregressive autoencoder: RNN-based ones for regular time-series~\cite{DBLP:journals/corr/ChoMGBSB14} and Neural CDE, GRU-ODE for irregular setting~\cite{DBLP:journals/corr/abs-2005-08926,DBLP:journals/corr/abs-1905-12374}. In this section, we describe the encoder and decoder of both regular and irregular time-series generation, respectively.

For regular time-series generation, the encoder $e$ and the decoder $d$ consist of recurrent neural networks, e.g., gated recurrent units (GRUs)~\cite{DBLP:journals/corr/ChoMGBSB14}. Since we use RNNs, both $e$ and $d$ are defined recursively as follows:
\begin{equation}\label{eq:auto}
    \mathbf{h}_n = e(\mathbf{h}_{n-1}, \mathbf{x}_n), \qquad \hat{\mathbf{x}}_n = d(\mathbf{h}_n),
\end{equation}where $\hat{\mathbf{x}}_n$ denotes a reconstructed time-series sample at sequential order $n$. It is well-known that RNN was devised to efficiently handle variable sequences by summarizing past observations~\cite{DBLP:journals/corr/ChoMBB14}. For example, there is a paper used RNN-based encoder and decoder to provide a reversible mapping between features and latent representations, thereby reducing the high-dimensionality of the adversarial learning space, which is well supported by $\textbf{h}_n \sim \textbf{x}_{1:n}$~\cite{yoon2019timegan}.

To process irregular time-series, one can use continuous-time methods for constructing the encoder and the decoder. In our case, we use neural controlled differential equations (NCDEs) for designing the encoder and GRU-ODEs for designing the decoder, respectively~\cite{DBLP:journals/corr/abs-2005-08926, DBLP:journals/corr/abs-1905-12374}. Our encoder based on NCDEs can be defined as follows:
\begin{align*}
% &\mathbf{x}_n=(t_n,\mathbf{u}(t_n)), \mathbf{h}_n=(t_n,\mathbf{v}(t_n)),\\
\mathbf{h}(t_n) = \mathbf{h}(t_{n-1})+\int_{t_{n-1}}^{t_n}f(t,\mathbf{h}(t); \theta_f){dX(t)\over dt}dt,
\end{align*} where $X(t)$ is an interpolated continuous path from $\textbf{x}_{1:N}$ --- NCDEs typically use the natural cubic spline algorithm to define $X(t)$, which is twice differentiable and therefore, there is not any problem to be used for forward inference and backward training. In other words, NCDEs evolve the hidden state $\mathbf{h}(t)$ by solving the above Riemann-Stieltjes integral.

For the decoder, one can use the following GRU-ODE-based definition:
\begin{align*}
&\overline{\mathbf{d}}(t_n)=\mathbf{d}(t_{n-1})+\int_{t_{n-1}}^{t_n}g(t,\mathbf{d}(t); \theta_g)dt, \\ &\mathbf{d}(t_n)=\text{GRU}(\textbf{h}(t_{n}),\overline{\textbf{d}}(t_{n})), \qquad \hat{\mathbf{x}}_n = FC(\mathbf{d}(t_n)),
\end{align*}where $FC$ denotes a fully-connected layer-based output layer. The intermediate hidden representation $\overline{\mathbf{d}}(t_n)$ is jumped into the hidden representation $\mathbf{d}(t_n)$ by the GRU-based jump layer. At the end, there is an output layer. We take the architecture for functions $f$, $g$ described in Table~\ref{table9}.

\begin{table}[h]
\centering
% \vskip 0.15in
% \scalebox{1.0}{
    % \begin{minipage}[c]{\linewidth}
        \begin{tabular}{c|c|c}
            \hline
            Layer & Activation function & Linear \\
            \hline
            1 & ReLU & {{dim}(\textbf{x})$\rightarrow$ 4 dim(\textbf{x})}\\
            2 & ReLU & {4 {dim}(\textbf{x})$\rightarrow$ 4 {dim}(\textbf{x})}\\
            3 & ReLU & {4 {dim}(\textbf{x})$\rightarrow$ 4 {dim}(\textbf{x})}\\
            4 & Tanh & {4 {dim}(\textbf{x})$\rightarrow$ {dim}(\textbf{x})}\\
            \hline
        \end{tabular}
        \\
        \vspace{2em}
    % \end{minipage}
        % \vspace{3em}
    % \begin{minipage}[c]{\linewidth}
        \begin{tabular}{c|c|c|c}
            \hline
            Layer & Gate & Activation function & Linear \\
            \hline
            \multirow{3}{*}{1} & $r_t$ & ReLU & \multirow{3}{*}{{{dim}(\textbf{h})$\rightarrow$ dim(\textbf{h})}}\\
            & $z_t$ & ReLU & \\
            & $u_t$ & Tanh & \\
            \hline
        \end{tabular}
    % \end{minipage}

% }
% \caption{Discriminative and Predictive Score}
% \vskip -0.1in
\caption{Architecture of functions $f$(upper) and $g$(lower). Each layer of encoder and gate of decoder takes $(\sigma \circ \text{Linear})$ form where $\sigma$ denotes activation function. We describe which activation and Linear function are used.}\label{table9}
\end{table}

For our irregular time-series experiments, i.e, dropping 30\%, 50\%, and 70\% of observations from regular time-series, we use the above encoder and decoder definitions and have good results.

After embedding real time-series data onto a latent space, we can train the conditional score network with its conditional log-likelihood, whose architecture is described in the following Section~\ref{arch}. The encoder and decoder are pre-trained before our main training.

\subsection{Conditional Score Network}\label{arch}

Unlike other generation tasks, e.g., image generation~\cite{song2021SDE} and tabular data synthesis~\cite{kim2022sos}, where each sample is independent, time-series observations are dependent to their past observations. Therefore, the score network for time-series generation must be designed to learn the conditional log-likelihood given past generated observations, which is more complicated than that in image generation. 

In order to learn the conditional log-likelihood, we modify the popular U-net~\cite{ronneberger2015unet} architecture for our purposes. Since U-net has achieved various excellent results for other generative tasks~\cite{song2019smld,song2021SDE}, we modify its 2-dimensional convolution layers to 1-dimensional ones for handling time-series observations. The modified U-net, denoted $M_{\theta}$, is trained to learn our conditional score function (cf. Eq.~\eqref{eq:ourscore}). More details on training and sampling with $M_{\theta}$ are in Sec.~\ref{sec:sampling}.

\subsection{Training Objective Function}\label{sec:train}
\paragraph{Loss for autoencoder} We use two training objective functions. First, we train the encoder and the decoder using $L_{ed}$. Let $\mathbf{x}_{1:N}^0 \sim p(\mathbf{x}_{1:N}^0)$ and $\hat{\mathbf{x}}_{1:N}^0$ denote an real time-series sample and its reconstructed copy by the encoder-decoder process, respectively. Then, $L_{ed}$ denotes the following MSE loss between $\mathbf{x}_{1:N}^0$ and its reconstructed copy $\hat{\mathbf{x}}_{1:N}^0$:
\begin{align*}
L_{ed} = \mathbb{E}_{\mathbf{x}_{1:N}^0}[\left\|\hat{\mathbf{x}}_{1:N}^0 - \mathbf{x}_{1:N}^0 \right\|_2^2].
\end{align*}

\paragraph{Loss for score network} Next, we define another loss $L_{score}^{\mathcal{H}}$ in Eq.~\eqref{eq:ourscore} to train the conditional score network $M_{\theta}$, which is one of our main contributions. In order to derive the training loss $L_{score}^{\mathcal{H}}$ from the initial loss definition $L_{1}$, we describe its step-by-step derivation procedure. At sequential order $n$ in $\{1,...,N\}$, we diffuse $\mathbf{x}_{1:n}^0$ through a sufficiently large number of steps of the forward SDE to a Gaussian distribution. Let $\mathbf{x}_{1:n}^s$ denotes a diffused sample at step $s \in [0,1]$ from $\mathbf{x}_{1:n}^0$. Then, the conditional score network $M_{\theta}({s, \mathbf{x}}_{1:n}^s, \mathbf{x}_{1:n-1}^0)$ can be trained to learn the gradient of the conditional log-likelihood with the following $L_{1}$ loss: 
\begin{equation*}
L_{1} = \mathbb{E}_{s}\mathbb{E}_{{\textbf{x}}_{1:N}^0}\left[\sum_{n=1}^{N}\lambda(s)l_{1}(n,s)\right], 
\end{equation*}
where 
\begin{footnotesize}
\begin{equation*}
l_{1}(n,s) = \mathbb{E}_{{\textbf{x}}_{1:n}^s}\left[\left\|M_{\theta}(s,{\textbf{x}}_{1:n}^s,\textbf{x}_{1:n-1}^0) - {\nabla}_{{\textbf{x}}_{1:n}^s}\log p({\textbf{x}}_{1:n}^s|\textbf{x}_{1:n-1}^0)\right\|_2^2\right].
\end{equation*}
\end{footnotesize}

In the above definition, ${\nabla}_{{\textbf{x}}_{1:n}^s}\log p({\textbf{x}}_{1:n}^s|\textbf{x}_{1:n-1}^0)$, where $\textbf{x}_{i}^0$ depends on $\textbf{x}_{1:i-1}^0$ for each $i \in \{2,...,n\}$, is designed specially for time-series generation. Note that for our training, ${\textbf{x}}_{1:n}^s$ is sampled from ${p}({\textbf{\textbf{x}}}_{1:n}^s|\textbf{x}_{1:n-1}^0)$, and $s$ is uniformly sampled from $[0,1]$.

However, using the above formula, which is a na\"ive score matching on time-series, is
computationally prohibitive~\cite{JMLR:v6:hyvarinen05a,DBLP:journals/corr/abs-1905-07088}. Thanks to the following theorem, the more efficient denoising score loss $L_{score}$ can be defined. %$L_{2}$ can be defined.

\begin{theorem}[Autoregressive denoising score matching]\label{thm1}
$l_{1}(n,s)$ can be replaced with the following $l_{2}(n,s)$
\begin{equation*}
L_{score} = \mathbb{E}_{s}\mathbb{E}_{\textbf{\textsc{x}}_{1:N}^0}\left[\sum_{n=1}^{N}\lambda(s)l_2(n, s) \right],
\end{equation*}where
\begin{equation*}
l_{2}(n,s) = \mathbb{E}_{{\textbf{\textsc{x}}}_{1:n}^s}\left[\left\|M_{\theta}(s,{\textbf{\textsc{x}}}_{1:n}^s,\textbf{\textsc{x}}_{1:n-1}^0) - 
{\nabla}_{{\textbf{\textsc{x}}}_{1:n}^s}\log p({\textbf{\textsc{x}}}_{1:n}^s|\textbf{\textsc{x}}_{1:n}^0)\right\|_2^2\right].
\end{equation*}
Then, $L_{1}=L_{score}$ is satisfied.\qed
\end{theorem}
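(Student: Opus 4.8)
The plan is to recognize Theorem~\ref{thm1} as the autoregressive/conditional incarnation of Vincent's denoising score matching identity (the same device that turns $L$ into $L^*$ in Section~\ref{sec:SGM}), and to prove it by matching the two losses term by term after expanding the squared norms. Because both $L_{1}$ and $L_{score}$ are assembled by averaging $\lambda(s)\,l_i(n,s)$ over $s\sim\mathcal{U}[0,1]$ and over the data $\mathbf{x}_{1:N}^0$, and then summing over $n$, it suffices to prove the identity for each fixed $n$ and $s$ and each fixed conditioning prefix $c:=\mathbf{x}_{1:n-1}^0$; the outer linear operators (integration in $s$, expectation over the data, summation over $n$) then carry the statement up to $L_{1}=L_{score}$.

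First I would isolate the one probabilistic fact that makes everything work: the forward SDE noises the full clean vector $\mathbf{x}_{1:n}^0$, so the transition kernel depends only on $\mathbf{x}_{1:n}^0$ and not on the prefix, i.e.\ $p(\mathbf{x}_{1:n}^s\mid \mathbf{x}_{1:n}^0,\mathbf{x}_{1:n-1}^0)=p(\mathbf{x}_{1:n}^s\mid \mathbf{x}_{1:n}^0)$. Marginalising the new coordinate $\mathbf{x}_n^0$ against $p(\mathbf{x}_n^0\mid\mathbf{x}_{1:n-1}^0)$ then yields the key identity
\[
p(\mathbf{x}_{1:n}^s\mid\mathbf{x}_{1:n-1}^0)=\int p(\mathbf{x}_{1:n}^s\mid\mathbf{x}_{1:n}^0)\,p(\mathbf{x}_n^0\mid\mathbf{x}_{1:n-1}^0)\,d\mathbf{x}_n^0,
\]
which links the conditional marginal appearing in $l_{1}$ to the transition kernel appearing in $l_{2}$. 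Crucially, $M_{\theta}(s,\mathbf{x}_{1:n}^s,\mathbf{x}_{1:n-1}^0)$ is a function of $(\mathbf{x}_{1:n}^s,c)$ only, hence constant in the marginalised latent $\mathbf{x}_n^0$ and extractable from inner integrals.

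Next I would expand each $\|\cdot\|_2^2$ in $l_{1}$ and $l_{2}$ into the three standard pieces: the model self-norm $\|M_{\theta}\|^2$, the cross term $-2\langle M_{\theta},\nabla\log p\rangle$, and the target self-norm. The $\|M_{\theta}\|^2$ pieces agree because in both cases the law of $\mathbf{x}_{1:n}^s$ given $c$ is exactly $p(\mathbf{x}_{1:n}^s\mid c)$ (in $l_{2}$ after integrating the transition kernel against $p(\mathbf{x}_n^0\mid c)$) and $M_{\theta}$ ignores $\mathbf{x}_n^0$. The heart of the argument is the cross term, and this is where I expect the main obstacle: I would use $p\,\nabla\log p=\nabla p$ in both expressions, substitute the marginalisation identity into the $l_{1}$ cross term, interchange the gradient $\nabla_{\mathbf{x}_{1:n}^s}$ with the $\mathbf{x}_n^0$-integral (a differentiation-under-the-integral / Fubini step, legitimate because the Gaussian transition kernel is smooth and rapidly decaying), and recognise the result as exactly the $l_{2}$ cross term. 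The care needed lies entirely in justifying that interchange and in keeping $M_{\theta}$ outside the inner integral.

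Finally I would collect terms: the two losses share the $\|M_{\theta}\|^2$ and cross terms and differ only in the target self-norm pieces, which depend on the data and the SDE but not on $\theta$. Hence $L_{1}$ and $L_{score}$ differ by an additive constant independent of $\theta$, so they have the same minimiser and replacing the intractable $l_{1}$ by the denoising form $l_{2}$ (whose target $\nabla_{\mathbf{x}_{1:n}^s}\log p(\mathbf{x}_{1:n}^s\mid\mathbf{x}_{1:n}^0)$ is available in closed form, the kernel being Gaussian) leaves the trained score network unchanged. This is the precise sense of ``$L_{1}=L_{score}$'', matching the $L=L^*$ substitution recalled in Section~\ref{sec:SGM} and the assertion in Figure~\ref{fig1} that the optimal $\theta$ is identical.
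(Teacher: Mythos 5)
Your proposal is correct and follows essentially the same route as the paper: the paper's Lemma~\ref{lemma1} carries out exactly your Vincent-style expansion (cross term via $p\,\nabla\log p=\nabla p$ and the marginalisation of $\mathbf{x}_n^0$ against $p(\mathbf{x}_n^0\mid\mathbf{x}_{1:n-1}^0)$, with $M_\theta$ pulled out of the inner integral), and the body of the theorem's proof then performs the same absorption of the extra expectation over $\mathbf{x}_n^0$ into $\mathbb{E}_{\mathbf{x}_{1:N}^0}$ that you delegate to ``the outer linear operators.'' Your explicit remark that the identity holds only up to an additive $\theta$-independent constant is in fact what the paper's own lemma establishes (its final line reads $l_1=\cdots+C$), so your reading of ``$L_1=L_{score}$'' as equality of minimisers is the accurate one.
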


\noindent\emph{Sketch of proof.}
(A complete, line-by-line proof will be included in the arXiv version of this paper.) Define 
$f(\mathbf{x}_{1:n}^0):=\mathbb{E}_{s}\mathbb{E}_{\mathbf{x}_{1:n}^s}\!\left[\lambda(s)\,\|M_\theta-\nabla_{\mathbf{x}_{1:n}^s}\log p(\mathbf{x}_{1:n}^s\!\mid\!\mathbf{x}_{1:n}^0)\|_2^2\right]$.
Then
$L_2=\sum_{n=1}^N \mathbb{E}_{\mathbf{x}_{1:N}^0}\mathbb{E}_{\mathbf{x}_n^0\mid \mathbf{x}_{1:n-1}^0}[f(\mathbf{x}_{1:n}^0)]$.
By the law of total expectation (marginalizing the suffix $\mathbf{x}_{n+1:N}^0$) and linearity of expectation, this equals 
$\mathbb{E}_{\mathbf{x}_{1:N}^0}\!\left[\sum_{n=1}^N f(\mathbf{x}_{1:n}^0)\right]=L_{\text{score}}$,
so $L_1=L_{\text{score}}$.

Since we pre-train the encoder and decoder, the encoder can embed $\textbf{x}_{1:n}^0$ into $\textbf{h}_n^0 \in \mathcal{H}$. Ideally, $\textbf{h}_n^0$ involves the entire information of $\textbf{x}_{1:n}^0$.
Therefore, $L_{score}$ can be re-written as follows with the embeddings in the latent space:
\begin{equation}\label{eq:ourscore}
 L_{score}^{\mathcal{H}} = \mathbb{E}_{s}\mathbb{E}_{\textbf{h}_{1:N}^0}\sum_{n=1}^{N}\left[\lambda(s)l_3(n, s)\right],
\end{equation}
\noindent with {\small $l_{3}(n, s) = \mathbb{E}_{\textbf{h}_{n}^s}\left[\left\|M_{\theta}(s,\textbf{h}_{n}^s,\textbf{h}_{n-1}^0)-{\nabla}_{\textbf{h}_{n}^s}\log p(\textbf{h}_{n}^s|\textbf{h}_{n}^0)\right\|_2^2\right]$}. $L_{score}^{\mathcal{H}}$ is what we use for our experiments (instead of $L_{score}$). Until now, we introduced our target objective functions, $L_{ed}$ and $L_{score}^{\mathcal{H}}$. We note that we use exactly the same weight $\lambda(s)$ as that in~\cite{song2021SDE}.

\subsection{Training and Sampling Procedures}\label{sec:sampling}

\paragraph{Training method} We explain details of our training method.
At first, we pre-train both the encoder and decoder using $L_{ed}$. After pre-training them, we train the conditional score network. When training the latter one, we use the embedded hidden vectors produced by the encoder. After encoding an input $\mathbf{x}_{1:N}^0$, we obtain its latent vectors $\mathbf{h}_{1:N}^0$ --- we note that each hidden vector $\mathbf{h}_n^0$ has all the previous information from 1 to $n$ for the RNN-based encoder's autoregressive property as shown in the Equation~\eqref{eq:auto}. We use the following forward process~\cite{song2021SDE}, where $n$ means the sequence order of the input time-series, and $s$ denotes the time (or step) of the diffusion step :
\begin{equation*}
    d\textbf{h}_n^s=\textbf{f}(s,\textbf{h}_n^s)ds+g(s)d\textbf{w}, \qquad s \in [0,1].
\end{equation*}

Note that we only use the VP and subVP-based TSGM in our experiments and exclude the VE-based one for its inferiority for time series synthesis in our experiments. During the forward process, the conditional score network reads the pair ($s$, $\mathbf{h}_n^s$, $\mathbf{h}_{n-1}^0$) as input and thereby, it can learn the conditional score function $\nabla \log p(\mathbf{h}_n^s | \mathbf{h}_{n-1}^0)$ by using $L_{score}^{\mathcal{H}}$, where $\mathbf{h}_{0}^0 = \mathbf{0}$.

Although we basically train the conditional score network and the encoder-decoder pair alternately after the pre-training step, for some datasets, we found that training only the conditional score network achieves better results after pre-training the autoencoder. Therefore, $use_{alt}=\{True, False\}$ is a hyperparameter to set whether we use the alternating training method.

\paragraph{Sampling method} After the training procedure, we use the following conditional reverse process:
\begin{equation*}
    d\textbf{h}_n^s=[\textbf{f}(s,\textbf{h}_n^s)-g^2(s)\nabla_{\textbf{h}_n^s}{\log p(\textbf{h}_n^s|\textbf{h}_{n-1}^0)}]ds+g(s)d\bar{\textbf{w}},
\end{equation*}where $s$ is uniformly sampled over $[0,1]$. The conditional score function in this process can be replaced with the trained score network $M_{\theta}(s,\textbf{h}_{n}^s,\textbf{h}_{n-1}^0)$. The detailed sampling method is as follows:
\begin{enumerate}
    \item At first, we sample $\textbf{z}_1$ from a Gaussian prior distribution and set $\textbf{h}_{1}^1 = \textbf{z}_1$ and $\mathbf{h}_{0}^0  = \mathbf{0}$. %zero padding of the same size as the latent vector. 
    We then generates an initial observation $\hat{\textbf{h}}_1^0$ by denoising $\textbf{h}_{1}^1$ following the conditional reverse process with $M_{\theta}(s, \textbf{h}_{n}^s, \mathbf{h}_{0}^0)$ via the \textit{predictor-corrector} method~\cite{song2021SDE}.

    \item We repeat the following computation for every $2 \leq n \leq N$, i.e., recursive generation. We sample $\textbf{z}_{n}$ from a Gaussian prior distribution and set $\textbf{h}_{n}^1 = \textbf{z}_{n}$ for $n \in \{2,...,N\}$. After reading the previously generated samples $\hat{\textbf{h}}_{n-1}^0$, we then denoise $\textbf{h}_{n}^1$ following the conditional reverse process with $M_{\theta}(s,\textbf{h}_{n}^s,\textbf{h}_{n-1}^0)$ to generate $\hat{\textbf{h}}_{n}^0$ via the \textit{predictor-corrector} method.
\end{enumerate}
  
\noindent Once the sampling procedure is finished, we can reconstruct ${\hat{\textbf{x}}}_{1:N}^0$ from $\hat{\textbf{h}}_{1:N}^0$ using the trained decoder at once.

\begin{table*}[t]
\centering
\vspace{-1em}
% \scriptsize
\footnotesize
\renewcommand{\arraystretch}{1.0}
\setlength{\tabcolsep}{7pt}
\vspace{1.0em}
\resizebox{\textwidth}{!}
{
        \begin{tabular}{c|c|cccc|cccc}
            \toprule
             & \multirow{2}{*}{Method} & \multicolumn{4}{c|}{Regular Settings} & \multicolumn{4}{c}{Irregular Settings (Missing Rate: 30\%)} \\
             & & Stocks & Energy & Air & AI4I & Stocks & Energy & Air & AI4I\\ \midrule
            \parbox[t]{2mm}{\multirow{14}{*}{\rotatebox{90}{Disc. score}}} & TSGM-VP & \underline{.022}$\pm$\underline{.005} & \underline{.221}$\pm$\underline{.025} & \underline{.122}$\pm$\underline{.014} & .147$\pm$.005 & .062$\pm$.018 & \textbf{.294}$\pm$\textbf{.007} & \textbf{.190}$\pm$\textbf{.042} & \underline{.142}$\pm$\underline{.048}  \\ 
            & TSGM-subVP & \textbf{.021}$\pm$\textbf{.008} & \textbf{.198}$\pm$\textbf{.025} & .127$\pm$.010 & .150$\pm$.010 & \textbf{.025}$\pm$\textbf{.009} & .326$\pm$.008 & \underline{.240}$\pm$\underline{.018} & \textbf{.121}$\pm$\textbf{.082}\\ \cline{2-10}
            & T-Forcing & .226$\pm$.035 & .483$\pm$.004 & .404$\pm$.020 & .435$\pm$.025 & .409$\pm$.051 & .347$\pm$.046 & .458$\pm$.122 & .493$\pm$.018  \\ 
            & P-Forcing & .257$\pm$.026 & .412$\pm$.006 & .484$\pm$.007 & .443$\pm$.026 & .480$\pm$.060 & .491$\pm$.020 & .494$\pm$.012 & .430$\pm$.061  \\ 
            & TimeGAN & .102$\pm$.031 & .236$\pm$.012 & .447$\pm$.017 & .070$\pm$.009 & .411$\pm$.040 & .479$\pm$.010 & .500$\pm$.001 & .500$\pm$.000  \\
            & RCGAN & .196$\pm$.027 & .336$\pm$.017 & .459$\pm$.104 & .234$\pm$.015 & .500$\pm$.000 & .500$\pm$.000 & .500$\pm$.000 & .500$\pm$.000  \\ 
            & C-RNN-GAN & .399$\pm$.028 & .499$\pm$.001 & .499$\pm$.000 & .499$\pm$.001 & .500$\pm$.000 & .500$\pm$.000 & .500$\pm$.000 & .450$\pm$.150  \\ 
            & TimeVAE & .175$\pm$.031 & .498$\pm$.006 & .381$\pm$.037 & .446$\pm$.024 & .423$\pm$.088 & .382$\pm$.124 & .373$\pm$.191 & .384$\pm$.086  \\
            % & WaveGAN-D & & & &\\
            & COT-GAN & .285$\pm$.030 & .498$\pm$.000 & .423$\pm$.001 & .411$\pm$.018 & .499$\pm$.001 & .500$\pm$.000 & .500$\pm$.000 & .500$\pm$.000  \\ 
            & CTFP & .499$\pm$.000 & .500$\pm$.000 & .499$\pm$.000 & .499$\pm$.001 & .500$\pm$.000 & .500$\pm$.000 & .500$\pm$.000 & .499$\pm$.001  \\ 
            & GT-GAN & .077$\pm$.031 & .221$\pm$.068 & .413$\pm$.001 & .394$\pm$.090 & .251$\pm$.097 & .333$\pm$.063 & .454$\pm$.029 & .435$\pm$.018  \\ 
            & KoVAE & .057$\pm$.038 & .289$\pm$.017 & .268$\pm$.014 & \underline{.069}$\pm$\underline{.031} & \underline{.038}$\pm$\underline{.027} & \underline{.322}$\pm$\underline{.012} & .289$\pm$.031 & .449$\pm$.037  \\ 
            & Diffusion-TS & .089$\pm$.024 & .319$\pm$.021 & \textbf{.118}$\pm$\textbf{.009} & \textbf{.021}$\pm$\textbf{.011} & .499$\pm$.001 & .467$\pm$.088 & .479$\pm$.021 & .437$\pm$.022  \\ 
            \midrule
            \cline{1-10}
            \parbox[t]{2mm}{\multirow{15}{*}{\rotatebox{90}{Pred. score}}} & TSGM-VP & \textbf{.037}$\pm$\textbf{.000} & .257$\pm$.000 & \textbf{.005}$\pm$\textbf{.000} & \textbf{.217}$\pm$\textbf{.000} & \textbf{.012}$\pm$\textbf{.002} & \textbf{.049}$\pm$\textbf{.001} & \textbf{.042}$\pm$\textbf{.002} & \underline{.067}$\pm$\underline{.013}  \\ 
            & TSGM-subVP & \textbf{.037}$\pm$\textbf{.000} & \underline{.252}$\pm$\underline{.000} & \textbf{.005}$\pm$\textbf{.000} & \textbf{.217}$\pm$\textbf{.000} & \textbf{.012}$\pm$\textbf{.001} & \textbf{.049}$\pm$\textbf{.001} & \underline{.044}$\pm$\underline{.004} & \textbf{.061}$\pm$\textbf{.001}\\ \cline{2-10}
            & T-Forcing & \underline{.038}$\pm$\underline{.001} & .315$\pm$.005 & .008$\pm$.000 & .242$\pm$.001 & .027$\pm$.002 & \underline{.090}$\pm$\underline{.001} & .112$\pm$.004 & .147$\pm$.010  \\ 
            & P-Forcing & .043$\pm$.001 & .303$\pm$.006 & .021$\pm$.000 & \underline{.220}$\pm$\underline{.000} & .079$\pm$.008 & .147$\pm$.001 & .101$\pm$.003 & .134$\pm$.005  \\ 
            & TimeGAN & .038$\pm$.001 & .273$\pm$.004 & .017$\pm$.004 & .253$\pm$.002 & .105$\pm$.053 & .248$\pm$.024 & .325$\pm$.005 & .251$\pm$.010  \\ 
            & RCGAN & .040$\pm$.001 & .292$\pm$.005 & .043$\pm$.000 & .224$\pm$.001 & .523$\pm$.020 & .409$\pm$.020 & .342$\pm$.018 & .329$\pm$.037  \\ 
            & C-RNN-GAN & .038$\pm$.000 & .483$\pm$.005 & .111$\pm$.000 & .340$\pm$.006 & .345$\pm$.002 & .440$\pm$.000 & .354$\pm$.060 & .400$\pm$.026  \\ 
            & TimeVAE & .042$\pm$.002 & .268$\pm$.004 & .013$\pm$.002 & .233$\pm$.010 & .207$\pm$.014 & .139$\pm$.004 & .105$\pm$.002 & .144$\pm$.003  \\ 
            & COT-GAN & .044$\pm$.000 & .260$\pm$.000 & .024$\pm$.001 & \underline{.220}$\pm$\underline{.000} & .274$\pm$.000 & .427$\pm$.000 & .451$\pm$.000 & .570$\pm$.000  \\ 
            & CTFP & .084$\pm$.005 & .469$\pm$.008 & .476$\pm$.235 & .412$\pm$.024 & .070$\pm$.009 & .499$\pm$.000 & .060$\pm$.027 & .424$\pm$.002  \\ 
            & GT-GAN & .040$\pm$.000 & .312$\pm$.002 & .007$\pm$.000 & .239$\pm$.000 & .077$\pm$.031 & .221$\pm$.068 & .064$\pm$.002 & .087$\pm$.013  \\
            & KoVAE & \textbf{.037}$\pm$\textbf{.000} & \textbf{.251}$\pm$\textbf{.000} & .039$\pm$.002 & .221$\pm$.001 & \underline{.014}$\pm$\underline{.003} & \textbf{.049}$\pm$\textbf{.001} & .057$\pm$.007 & \underline{.067}$\pm$\underline{.011}  \\
            & Diffusion-TS & \textbf{.037}$\pm$\textbf{.000} & \textbf{.251}$\pm$\textbf{.000} & \underline{.006}$\pm$\underline{.000} & \textbf{.217}$\pm$\textbf{.000} & .257$\pm$.001 & .164$\pm$.004 & .100$\pm$.001 & .141$\pm$.013  \\
            \cline{2-10}
            & Original & .036$\pm$.001 & .250$\pm$.003 & .004$\pm$.000 & .217$\pm$.000 & .011$\pm$.002 & .045$\pm$.001 & .044$\pm$.006 & .059$\pm$.001 \\ 
            \bottomrule
            \multicolumn{10}{c}{} \\ 
        \end{tabular}}
\vspace{-1.5em}
\caption{The left and right ones denote experimental results on regular time-series and irregular time-series with 30\% missing rates, respectively. Results for higher missing rates are in Table~\ref{table10}. %Table~\ref{table2_2}. 
Note that except for our method (TSGM), CTFP, and GT-GAN, the other methods cannot deal with irregular time series, so we make it possible for them to operate on irregular settings by replacing RNN encoder with GRU-D.}
\label{table2_1}
\vspace{-1.em}
\end{table*}

\begin{table*}[t]
\centering
% \scriptsize
\footnotesize
\renewcommand{\arraystretch}{1.0}
\setlength{\tabcolsep}{7pt}
\vspace{1.0em}
\resizebox{\textwidth}{!}
{
        \begin{tabular}{c|c|cccc|cccc}
            \toprule
            & \multirow{2}{*}{Method} & \multicolumn{4}{c|}{Irregular Settings (Missing Rate: 50\%)} & \multicolumn{4}{c}{Irregular Settings (Missing Rate: 70\%)} \\
             & & Stocks & Energy & Air & AI4I & Stocks & Energy & Air & AI4I\\ \midrule
            \parbox[t]{2mm}{\multirow{14}{*}{\rotatebox{90}{Disc. score}}} & TSGM-VP & \underline{.051}$\pm$\underline{.014} & .398$\pm$.003 & .272$\pm$.012 & \underline{.156}$\pm$\underline{.106} & \underline{.065}$\pm$\underline{.010} & .482$\pm$.003 & .337$\pm$.025 & \underline{.327}$\pm$\underline{.104}  \\ 
            & TSGM-subVP & \textbf{.031}$\pm$\textbf{.012} & .421$\pm$.008 & \textbf{.213}$\pm$\textbf{.025} & \textbf{.137}$\pm$\textbf{.102} & \textbf{.035}$\pm$\textbf{.009} & \textbf{.213}$\pm$\textbf{.025} & .329$\pm$.027 & \textbf{.235}$\pm$\textbf{.123}\\ \cline{2-10}
            & T-Forcing & .407$\pm$.034 & \underline{.376}$\pm$\underline{.046} & .499$\pm$.001 & .473$\pm$.045 & .404$\pm$.068 & .336$\pm$.032 & .499$\pm$.001 & .493$\pm$.010  \\ 
         & P-Forcing & .500$\pm$.000 & .500$\pm$.000 & .494$\pm$.012 & .437$\pm$.079 & .449$\pm$.150 & .494$\pm$.011 & .498$\pm$.002 & .440$\pm$.125  \\ 
         & TimeGAN & .477$\pm$.021 & .473$\pm$.015 & .500$\pm$.001 & .500$\pm$.000 & .485$\pm$.022 & .500$\pm$.000 & .500$\pm$.000 & .500$\pm$.000  \\ 
         & RCGAN & .500$\pm$.000 & .500$\pm$.000 & .500$\pm$.000 & .500$\pm$.000 & .500$\pm$.000 & .500$\pm$.000 & .500$\pm$.000 & .500$\pm$.000  \\ 
         & C-RNN-GAN & .500$\pm$.000 & .500$\pm$.000 & .500$\pm$.000 & .450$\pm$.150 & .500$\pm$.000 & .500$\pm$.000 & .500$\pm$.000 & .500$\pm$.000  \\ 
         & TimeVAE & .411$\pm$.110 & .436$\pm$.088 & .423$\pm$.153 & .389$\pm$.113 & .444$\pm$.148 & .498$\pm$.003 & .426$\pm$.148 & .371$\pm$.092  \\ 
         & COT-GAN & .499$\pm$.001 & .500$\pm$.000 & .500$\pm$.000 & .500$\pm$.000 & .498$\pm$.001 & .500$\pm$.000 & .500$\pm$.000 & .500$\pm$.000  \\ 
         & CTFP & .499$\pm$.000 & .500$\pm$.000 & .500$\pm$.000 & .499$\pm$.001 & .500$\pm$.000 & .500$\pm$.000 & .500$\pm$.000 & .499$\pm$.000  \\ 
         & GT-GAN & .265$\pm$.073 & \textbf{.317}$\pm$\textbf{.010} & .434$\pm$.035 & .276$\pm$.033 & .230$\pm$.053 & \underline{.325}$\pm$\underline{.047} & .444$\pm$.019 & .362$\pm$.043  \\ 
         & KoVAE & .104$\pm$.088 & \textbf{.317}$\pm$\textbf{.023} & \underline{.268}$\pm$\underline{.023} & .453$\pm$.040 & .085$\pm$.073 & .336$\pm$.040 & \underline{.323}$\pm$\underline{.029} & .463$\pm$.034  \\ 
         & Diffusion-TS & .465$\pm$.036 & .477$\pm$.028 & .435$\pm$.051 & .332$\pm$.071 & .463$\pm$.055 & .455$\pm$.068 & \textbf{.302}$\pm$\textbf{.154} & .404$\pm$.069  \\ 
         \cline{1-10}
            \midrule
            \parbox[t]{2mm}{\multirow{15}{*}{\rotatebox{90}{Pred. score}}} & TSGM-VP & \textbf{.011}$\pm$\textbf{.000} & \underline{.051}$\pm$\underline{.001} & \textbf{.041}$\pm$\textbf{.001} & \textbf{.060}$\pm$\textbf{.001} & \textbf{.011}$\pm$\textbf{.000} & .053$\pm$.001 & \underline{.043}$\pm$\underline{.000} & \underline{.092}$\pm$\underline{.024} \\ 
            & TSGM-subVP & \textbf{.011}$\pm$\textbf{.000} & \underline{.051}$\pm$\underline{.001} & \underline{.042}$\pm$\underline{.002} & \underline{.065}$\pm$\underline{.013} & \underline{.012}$\pm$\underline{.000} & \textbf{.042}$\pm$\textbf{.002} & \textbf{.042}$\pm$\textbf{.001} & .097$\pm$.020 \\ \cline{2-10}
            & T-Forcing & .038$\pm$.003 & .090$\pm$.000 & .121$\pm$.003 & .143$\pm$.005 & .031$\pm$.002 & .091$\pm$.000 & .116$\pm$.003 & .144$\pm$.004  \\ 
         & P-Forcing & .089$\pm$.010 & .198$\pm$.005 & .101$\pm$.003 & .116$\pm$.007 & .107$\pm$.009 & .193$\pm$.006 & .107$\pm$.002 & .125$\pm$.007  \\ 
         & TimeGAN & .254$\pm$.047 & .339$\pm$.029 & .325$\pm$.005 & .251$\pm$.010 & .228$\pm$.000 & .443$\pm$.000 & .425$\pm$.008 & .323$\pm$.011  \\ 
         & RCGAN & .333$\pm$.044 & .250$\pm$.010 & .335$\pm$.023 & .276$\pm$.066 & .441$\pm$.045 & .349$\pm$.027 & .359$\pm$.008 & .346$\pm$.029  \\ 
         & C-RNN-GAN & .273$\pm$.000 & .438$\pm$.000 & .289$\pm$.033 & .373$\pm$.037 & .281$\pm$.019 & .436$\pm$.000 & .306$\pm$.040 & .262$\pm$.053  \\ 
         & TimeVAE & .195$\pm$.012 & .143$\pm$.007 & .103$\pm$.002 & .144$\pm$.004 & .199$\pm$.009 & .134$\pm$.004 & .108$\pm$.004 & .142$\pm$.008  \\ 
         & COT-GAN & .246$\pm$.000 & .475$\pm$.000 & .557$\pm$.000 & .449$\pm$.000 & .278$\pm$.000 & .456$\pm$.000 & .556$\pm$.000 & .435$\pm$.000  \\ 
         & CTFP & .084$\pm$.005 & .469$\pm$.008 & .476$\pm$.235 & .412$\pm$.024 & .084$\pm$.005 & .469$\pm$.008 & .476$\pm$.235 & .412$\pm$.024  \\ 
         & GT-GAN & .018$\pm$.002 & .064$\pm$.001 & .061$\pm$.003 & .113$\pm$.024 & .020$\pm$.005 & .076$\pm$.001 & .059$\pm$.004 & .124$\pm$.003  \\ 
         & KoVAE & \underline{.012}$\pm$\underline{.001} & \textbf{.047}$\pm$\textbf{.001} & .058$\pm$.006 & .075$\pm$.019 & .013$\pm$.001 & \underline{.048}$\pm$\underline{.001} & .062$\pm$.004 & \textbf{.075}$\pm$\textbf{.020}  \\
         & Diffusion-TS & .176$\pm$.007 & .145$\pm$.001 & .089$\pm$.002 & .146$\pm$.004 & .160$\pm$.031 & .145$\pm$.001 & .094$\pm$.000 & .147$\pm$.005  \\
            \cline{2-10}
            & Original & .011$\pm$.002 & .045$\pm$.001 & .044$\pm$.006 & .059$\pm$.001 & .011$\pm$.002 & .045$\pm$.001 & .044$\pm$.006 & .059$\pm$.001 \\ 
            \bottomrule
        \end{tabular}}
\caption{
The left and right ones denote experimental results on irregular time-series with 50\% and 70\% missing rates, respectively.
Note that except for our method (TSGM), CTFP, and GT-GAN, the other methods cannot deal with irregular time series, so we make it possible for them to operate on irregular settings by replacing RNN encoder with GRU-D.}
\label{table10}
\vspace{-1.em}
\end{table*}

\begin{figure*}[t]
\centering
\includegraphics[width=0.7\textwidth]{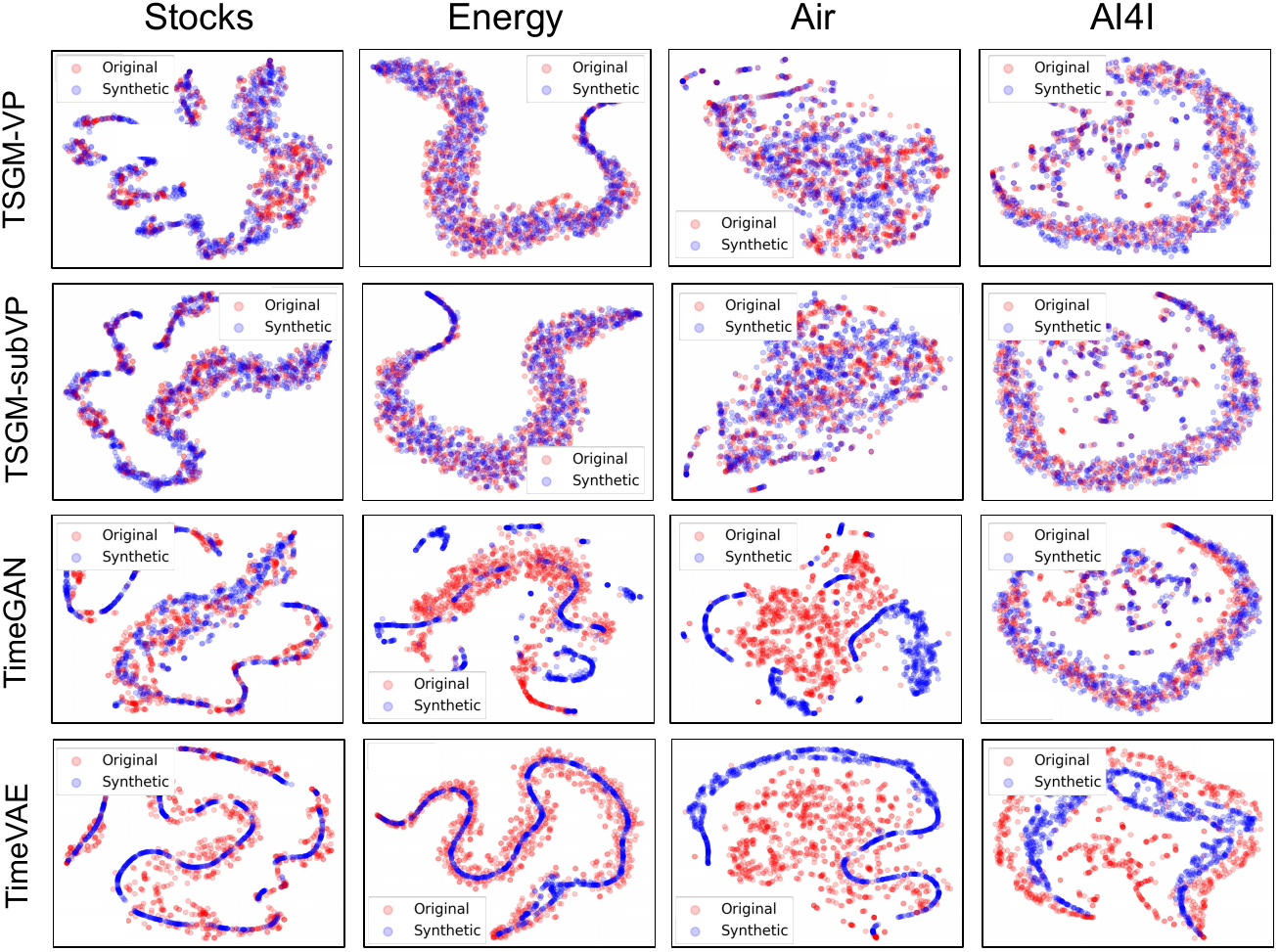} % Reduce the figure size so that it is slightly narrower than the column.
\caption{t-SNE plots for TSGM (1st and 2nd columns), TimeGAN (3rd columns), TimeVAE (4th columns), GT-GAN (5th columns) in regular time-series generation. Red and blue dots mean original and synthesized samples, respectively.}
\label{fig2}
\vskip -0.1in
\end{figure*}

\section{Experiments}

\subsection{Experimental Environments}

\subsubsection{Baselines and Datasets}

In the case of the regular time-series generation, we use 4 real-world datasets from various fields with 11 baselines. For the irregular time-series generation, we randomly remove some observations from each time-series sample with 30\%, 50\%, and 70\% missing rates. Therefore, we totally treat 16 datasets, i.e., 4 datasets with one regular and three irregular settings, and 11 baselines.

\begin{itemize}
    \item \textit{Stock}~\cite{yoon2019timegan}: The Google stock dataset was collected irregularly from 2004 to 2019. Each observation has (volume, high, low, opening, closing, adjusted closing prices), and these features are correlated.%each feature are correlated.
   \item \textit{Energy}~\cite{candanedo2017energy}:
   This dataset is from the UCI machine learning repository for predicting the energy use of appliances from highly correlated variables such as house temperature and humidity conditions.
    \item \textit{Air}~\cite{devito2008air}: The UCI Air Quality dataset was collected from 2004 to 2005. Hourly averaged air quality records are gathered using gas sensor devices in an Italian city.
    %Hourly averaged records of air quality are gathered by using gas sensor device in an Italian city. 
   %The data consists of 15 attributes, including several toxic gases.
    \item \textit{AI4I}~\cite{matzka2020}: AI4I means the UCI AI4I 2020 Predictive Maintenance dataset. This data reflects the industrial predictive maintenance scenario with correlated features including several physical quantities.
\end{itemize}

We use several types of generative methods for time-series as baselines. At first, we consider autoregressive generative methods: T-Forcing (teacher forcing)~\cite{graves2013tforcing,ilya2011tforcing} and P-Forcing (professor forcing)~\cite{lamb2016pforcing}. %, and WaveNet~\cite{oord2016wavenet}. 
Next, we use GAN-based methods: TimeGAN~\cite{yoon2019timegan}, RCGAN~\cite{esteban2017rcgan}, C-RNN-GAN~\cite{mogren2016crnngan}, COT-GAN~\cite{xu2020cotgan}, GT-GAN~\cite{https://doi.org/10.48550/arxiv.2210.02040}. We also test VAE-based methods into our baselines: TimeVAE~\cite{desai2021timevae}. Finally, we treat flow-based methods. Among the array of flow-based models designed for time series generation, we have chosen to compare our TSGM against CTFP~\cite{NEURIPS2020_58c54802}. This choice is informed by the fact that CTFP possesses the capability to handle both regular and irregular time series samples, aligning well with the nature of our task which involves generating both regular and irregular time series data. Also, we add some latest baselines, which are famous for using VAE to target irregular time-series generation like GT-GAN and diffusion-based time-series generation model, respectively~\cite{kovae,diffusionts}.

\subsubsection{Evaluation Metrics}

In the image generation domain, researchers have evaluated the \textit{fidelity} and the \textit{diversity} of models by using the Fr\'echet inception distance (FID) and inception score (IS). On the other hand, to measure the fidelity and the diversity of synthesized time-series samples, we use the following predictive score and the discriminative score as in~\cite{yoon2019timegan,https://doi.org/10.48550/arxiv.2210.02040}. We strictly follow the evaluation protocol agreed by the time-series research community~\cite{yoon2019timegan,https://doi.org/10.48550/arxiv.2210.02040}.  Both metrics are designed in a way that lower values are preferred. We run each generative method 10 times with different seeds, and report its mean and standard deviation of the following discriminative and predictive scores:

i) \textit{Predictive Score}: 
We use the predictive score to evaluate whether a generative model can successfully reproduce the temporal properties of the original data. To do this, we first train a popular LSTM-based sequence model for time-series forecasting with synthesized samples. However, the existing predictive score only predicts a last sample of total sequence, so a researcher suggests a comprehensive approach that considers the entire time series~\cite{https://doi.org/10.48550/arxiv.2210.02040}. For fair comparison, we reuse evaluation metrics of previous works for regular and irregular settings~\cite{yoon2019timegan,https://doi.org/10.48550/arxiv.2210.02040}.
% The performance of this predictive model is measured as the mean absolute error (MAE) on the original test data. This kind of evaluation paradigm is called as train-synthesized-test-real (TSTR) in the literature.

ii) \textit{Discriminative Score}:
In order to assess how similar the original and generated samples are, we train a 2-layer LSTM model that classifies the real/fake samples into two classes, real or fake. We use the performance of the trained classifier on the test data as the discriminative score. Therefore, lower discriminator scores mean real and fake samples are similar.

\subsection{Hyperparameters and Miscellaneous Environments}\label{hyper}

$M_{\theta}$ has various hyperparameters and for key hyperparameters, we set them as listed in Table~\ref{table6}. For other common hyperparameters with baselines, we reuse the default configurations of TimeGAN~\cite{yoon2019timegan} and VPSDE~\cite{song2021SDE} to conduct the regular time-series generation. We give our search space for the hyperparameters of TSGM. $iter_{pre}$ is in \{50000,100000\}. The dimension of hidden features, $d_{hidden}$, ranges from 2 times to 5 times the dimension of input features. On regular time-series generation, we follow the default values in TimeGAN~\cite{yoon2019timegan} and VPSDE~\cite{song2021SDE}. For irregular time-series tasks, we search the hidden dimension of decoder from 2 times to 4 times the dimension of input dimension, and follow GTGAN~\cite{https://doi.org/10.48550/arxiv.2210.02040} for other settings of NCDE-encoder and GRU-ODE-decoder.  

We follow default values for miscellaneous settings which are not explained on the baseline papers. Additionally, to deal with irregular time-series, we search the hyperparameters of GRU-D, which substitutes for RNN or are added to the head of baselines. We test the hidden dimension of GRU-D from 2 times to 4 times the dimension of input features.

\begin{table}[h]
\small
\centering
\setlength{\tabcolsep}{10pt}
\renewcommand{\arraystretch}{1.0}

% ---------- (a) Regular ----------
\subfloat[Regular settings\label{tab6:a}]{
\scalebox{0.8}{
\begin{tabular}{c|cccc}
\toprule
\multirow{2}{*}{Dataset} & \multicolumn{4}{c}{Regular Settings} \\
 & $\dim(\mathbf{h})$ & $use_{alt}$ & $iter_{pre}$ & $iter_{main}$ \\
\midrule
Stocks & 24 & True  & 50000  & \multirow{4}{*}{40000} \\
Energy & 56 & False & 100000 & \\
Air    & 40 & True  & 50000  & \\
AI4I   & 24 & True  & 50000  & \\
\bottomrule
\end{tabular}}
}

% ---------- (b) Irregular ----------
\subfloat[Irregular settings\label{tab6:b}]{
\scalebox{0.8}{
\begin{tabular}{c|ccccc}
\toprule
\multirow{2}{*}{Dataset} & \multicolumn{5}{c}{Irregular Settings} \\
 & $D_{hidden}$ & $\dim(\mathbf{h})$ & $use_{alt}$ & $iter_{pre}$ & $iter_{main}$ \\
\midrule
Stocks & 48  & 24 & True  & \multirow{4}{*}{50000} & \multirow{4}{*}{40000} \\
Energy & 112 & 56 & False &                        &                        \\
Air    & 40  & 40 & True  &                        &                        \\
AI4I   & 48  & 24 & True  &                        &                        \\
\bottomrule
\end{tabular}}
}

\caption{The best hyperparameter setting for TSGM on regular and irregular time-series. $D_{hidden}$ denotes the hidden dimension of the GRU-ODE decoder.}
\label{table6}
\vspace{-1.em}
\end{table}

The following software and hardware environments were used for all experiments: \textsc{Ubuntu} 18.04 LTS, \textsc{Python} 3.9.12, \textsc{CUDA} 9.1, \textsc{NVIDIA} Driver 470.141, i9 CPU, and \textsc{GeForce RTX 2080 Ti}.

\begin{table}[h]
\centering
%\resizebox{.95\columnwidth}{!}{
% \vskip 0.15in
\begin{tabular}{c|c|c|c}
    \hline
    Dataset & Dimension & \#Samples & Length \\
    \hline
    Stocks & 6 & 3685 & \multirow{4}{1em}{24}\\
    Energy & 28 & 19735 &\\
    Air & 13 & 9357 &\\
    AI4I & 5 & 10000 &\\
    \hline
\end{tabular}
\caption{Characteristics of the datasets we use for our experiments}
\label{table1}
\vspace{-2.em}
\end{table}

\begin{table*}[t]
\centering
% \scriptsize
\renewcommand{\arraystretch}{1.3}
\setlength{\tabcolsep}{2.5pt}
\scalebox{0.77}{
\begin{tabular}{c|c|c|c|c|c|c|c|c|c|c|c}
    \hline
    
        \multicolumn{2}{c|}{Method} & \multicolumn{2}{|c|}{TSGM} & \multicolumn{2}{|c|}{Depth of 3} & \multicolumn{2}{|c|}{500 steps} & \multicolumn{2}{|c|}{250 steps} & \multicolumn{2}{|c}{100 steps} \\ \hline
        \multicolumn{2}{c|}{SDE} & VP & subVP & VP & subVP & VP & subVP & VP & subVP & VP & subVP \\ \hline
        \multirow{2}{*}{\rotatebox{90}{Disc.}} & Stocks & .022±.005 & .021±.008 & .022±.004 & \textbf{.020}±\textbf{.007} & .025±.005 & .020±.004 & .067±.009 & .022±.009 & .202±.013 & .023±.005 \\ 
        & Energy & .221±.025 & .198±.025 & \textbf{.175}±\textbf{.009} & .182±.009 & .259±.003 & .248±.002 & .250±.003 & .247±.002 & .325±.003 & .237±.004 \\ \hline
        \multirow{2}{*}{\rotatebox{90}{Pred.}} & Stocks & .037±.000 & \textbf{.037}±\textbf{.000} & .037±.000 & .037±.000 & .037±.000 & .037±.000 & .037±.000 & .037±.000 & .039±.000 & .037±.000 \\ 
        & Energy & .257±.000 & \textbf{.252}±\textbf{.000} & .253±.000 & .253±.000 & .257±.000 & .253±.000 & .256±.000 & .253±.000 & .256±.000 & .253±.000 \\ \hline
        
\end{tabular}
}
\caption{Sensitivity results on the depth of $M_\theta$ and the number of sampling steps. Our default TSGM has a depth of 4 and its number of sampling steps is 1,000. For other omitted datasets, we observe similar patterns.}
\label{tbl:sens}
\vspace{-1.em}
\end{table*}

\subsection{Experimental Results}
At first, on the regular time-series generation, Table~\ref{table2_1} shows that our method achieves remarkable results, outperforming most of the other baselines. Especially, for Stock, Energy, and Air, TSGM exhibits overwhelming performance by large margins for the discriminative score.
Moreover, for the predictive score, TSGM performs the best and obtains almost the same scores as that of the original data, which indicates that generated samples from TSGM preserve all the predictive characteristics of the original data.

Next, on the irregular time-series generation, we give the result with the 30\% missing rate setting on Table~\ref{table2_1}. TSGM also defeats almost all baselines by large margins on both the discriminative and predictive scores. Interestingly, VP generates poorer data as the missing rate grows up, while subVP synthesizes better one. It is worth mentioning that Diffusion-TS, which performs well in regular settings, achieves inferior results in irregular settings, highlighting that its limitations are closely tied to regular time-series generation.

We show t-SNE visualizations and KDE plots for the regular time-series generation in Figure~\ref{fig2} and Figure~\ref{fig:kde}. TimeGAN, GT-GAN, and TimeVAE are representative GAN or VAE-based baselines. In the figures, unlike the baseline methods, the synthetic samples generated from TSGM consistently show successful recall from the original data. 

Furthermore, TSGM generates more diverse synthetic samples compared to the three representative baselines across all cases. Notably, TSGM achieves significantly higher diversity on the 
Energy and Air dataset, which exhibits the most complex correlations (cf. Fig.~\ref{fig2}). Diffusion models are known to produce more diverse data than GANs~\cite{bayat2023a,yang2023diffusion}. As demonstrated by our results, TSGM synthesizes diverse data, highlighting another advantage of using diffusion models as anticipated.

\subsection{Sensitivity and Ablation Studies}

We conduct two sensitivity studies on regular time-series: i) reducing the depth of our score network, ii) decreasing the sampling step numbers. The results are in Table~\ref{tbl:sens}.
At first, we modify the depth of our score network from 2 to 3 to check the sensitivity of a depth of score network. Surprisingly, we achieve a better discriminative score with a slight loss on the predictive score.
Next, we decrease the number of sampling steps for faster sampling from 1,000 steps to 500, 250, and 100 steps, respectively. For VP, the case of 500 steps achieves almost the same results as that of original TSGM. Surprisingly, in the case of subVP, we achieve good results until 100 steps.

% \vspace{+0.5em}/

\begin{table}
\vspace{-1.em}
\scriptsize
% \footnotesize
\centering
\begin{tabular}{c|c|c|c|c}
        \hline
        & Method & SDE & Stocks & Energy \\
        \cline{2-5}
        \parbox[t]{2mm}{\multirow{4}{*}{\rotatebox{90}{Disc.}}} & \multirow{2}{*}{TSGM} & VP & .022$\pm$.005 & .221$\pm$.025 \\
        & & subVP & \textbf{.021}$\pm$\textbf{.008} & \textbf{.198}$\pm$\textbf{.025} \\
        \cline{2-5}
        & {\multirow{2}{*}{w/o pre-training}} & VP & .022$\pm$.004 & .322$\pm$.003 \\
        & & subVP & .059$\pm$.006 & .284$\pm$.004  \\
        \hline
        \parbox[t]{2mm}{\multirow{4}{*}{\rotatebox{90}{Pred.}}} & \multirow{2}{*}{TSGM} & VP & .037$\pm$.000  & .257$\pm$.000 \\
        & & subVP & \textbf{.037}$\pm$\textbf{.000} & .252$\pm$.000 \\
        \cline{2-5}
        & \multirow{2}{*}{w/o pre-training} & VP & .037$\pm$.000 & .252$\pm$.000 \\
        & & subVP & .037$\pm$.000 & \textbf{.251}$\pm$\textbf{.000} \\
        \hline
\end{tabular}
\caption{Comparison between with and without pre-training the autoencoder}
\label{tbl:abl}
\vspace{-2.em}
\end{table}

\begin{table}[h]
\centering
\begin{tabular}{c|cc|cc}
    \toprule
    Method & Stock & Energy & Stock & Energy \\ \midrule
    TimeGAN & 1.1 & 1.6  & 0.43  & 0.47  \\
    GTGAN & 2.3 & 2.3  & 0.43  & 0.47 \\
    TSGM & 1.9 & 1.9  & 86.32 & 85.89 \\
    \bottomrule
\end{tabular}
\caption{The memory usage of TimeGAN, GTGAN, and TSGM for training, and the sampling time for generating 100 samples on each dataset. Note that the original score-based model requires 3,214 seconds to sample 1,000 CIFAR-10 images, while StyleGAN needs 0.4 seconds, which is similar to the comparison between TSGM and TimeGAN.}
\label{table7}
\vspace{-2.em}
\end{table}

% \vspace{-0.5em}

As an ablation study, we simultaneously train the conditional score network, encoder, and decoder from scratch on regular time-series generation (i.e., without the pre-training process). The results are in Table~\ref{tbl:abl}. These ablation models are worse than the full model due to the increased training complexity, but they still outperform many baselines. This ablation study shows the efficacy of pre-training our autoencoder.

\vspace{-.5em}

\subsection{Empirical Space and Time Complexity Analyses}\label{emp_time}

We report the memory usage during training and the wall-clock time for generating 100 time-series samples in Table~\ref{table7}. We compare TSGM to TimeGAN~\cite{yoon2019timegan} and GTGAN~\cite{https://doi.org/10.48550/arxiv.2210.02040}.
% only with TimeGAN~\cite{yoon2019timegan} since it is the state-of-the-art method. 
Our method is relatively slower than TimeGAN and GTGAN, which is a fundamental drawback of all SGMs. For example, the original score-based model~\cite{song2021SDE} requires 3,214 seconds for sampling 1,000 CIFAR-10 images while StyleGAN~\cite{DBLP:journals/corr/abs-1912-04958} needs 0.4 seconds. However, we also emphasize that this problem can be relieved by using the techniques suggested in~\cite{xiao2022tackling,DBLP:journals/corr/abs-2105-14080} as we mentioned in the conclusion section.

% \vspace{-1.em}

\section{Conclusions}

% \vspace{-1.em}

We presented a score-based generative model framework for universal time-series generation. We combined an autoencoder and our score network into a single framework to accomplish the goal --- our framework supports RNN-based or continuous-time method-based autoencoders. We also designed an appropriate denoising score matching loss for our generation task and achieved state-of-the-art results on various datasets in terms of the discriminative and predictive scores. In addition, we conducted rigorous ablation and sensitivity studies to prove the efficacy of our model design.

\section*{Acknowledgements}

Noseong Park is the corresponding author of this paper. This work was supported by Samsung SDS. This work was also supported by Institute of Information \& communications Technology Planning \& Evaluation (IITP) grant funded by the Korea government (MSIT) (No.2022-0-00857, Development of Financial and Economic Digital Twin Platform based on AI and Data)

% \begin{thebibliography}{00}
\bibliographystyle{IEEEtran}
\bibliography{bib/IEEEabrv,bib/references}
% \end{thebibliography}

\newpage
\appendix
\onecolumn
\section{Proofs}\label{appen:proof}

We introduce an additional lemma to prove Theorem~\ref{thm1}. In the following lemma, we state about the denoising score matching on time-series.

\begin{lemma}\label{lemma1}%[Denoising score matching on time-series]
In $L_{1}$ loss function, $l_{1}(n,s)$ can be replaced by the following $l^\star_{2}(n,s)$:

\begin{equation*}
l^\star_{2}(n,s) = \mathbb{E}_{\textbf{\textsc{x}}_n^0}\mathbb{E}_{{\textbf{\textsc{x}}}_{1:n}^s}\left[\left\|M_{\theta}(s,{\textbf{\textsc{x}}}_{1:n}^s,\textbf{\textsc{x}}_{1:n-1}^0)-{\nabla}_{{\textbf{\textsc{x}}}_{1:n}^s}\log p({\textbf{\textsc{x}}}_{1:n}^s|\textbf{\textsc{x}}_{1:n}^0)\right\|_2^2 \right],
\end{equation*}where $\textbf{\textsc{x}}_n^0$ and ${\textbf{\textsc{x}}}_{1:n}^s$ are sampled from ${p}(\textbf{\textsc{x}}_n^0|\textbf{\textsc{x}}_{1:n-1}^0)$ and ${p}({\textbf{\textsc{x}}}_{1:n}^s|\textbf{\textsc{x}}_{1:n}^0)$. Therefore, we can use an alternative objective, $L_{2} = \mathbb{E}_{s}\mathbb{E}_{{\textbf{\textsc{x}}}_{1:N}}\left[\sum_{n=1}^{N}\lambda(s)l^\star_{2}(n,s)\right]$ instead of $L_{1}$.
% \end{theorem}
\end{lemma}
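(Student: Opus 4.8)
The plan is to recognize Lemma~\ref{lemma1} as a conditional form of the denoising score matching identity of Vincent~\cite{vincent2011matching}, where the conditioning variable is the clean history $\mathbf{x}_{1:n-1}^0$. First I would fix the diffusion step $s$, the sequence index $n$, and the history $\mathbf{c} := \mathbf{x}_{1:n-1}^0$, treating $\mathbf{c}$ as a frozen parameter. Writing $\mathbf{y} := \mathbf{x}_{1:n}^s$ for the noisy variable and $\mathbf{x} := \mathbf{x}_n^0$ for the fresh clean observation, the three relevant conditional densities are the target $p(\mathbf{x}\mid\mathbf{c})$, the transition kernel $p(\mathbf{y}\mid\mathbf{x},\mathbf{c})=p(\mathbf{x}_{1:n}^s\mid\mathbf{x}_{1:n}^0)$, and their mixture $p(\mathbf{y}\mid\mathbf{c})=\int p(\mathbf{y}\mid\mathbf{x},\mathbf{c})\,p(\mathbf{x}\mid\mathbf{c})\,d\mathbf{x}=p(\mathbf{x}_{1:n}^s\mid\mathbf{x}_{1:n-1}^0)$. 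Under this dictionary, $l_1(n,s)$ is the score matching loss against the mixture score $\nabla_{\mathbf{y}}\log p(\mathbf{y}\mid\mathbf{c})$, whereas $l^\star_2(n,s)$ is the denoising loss against the analytically available Gaussian kernel score $\nabla_{\mathbf{y}}\log p(\mathbf{y}\mid\mathbf{x},\mathbf{c})$, averaged over $\mathbf{x}\sim p(\mathbf{x}\mid\mathbf{c})$.

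Next I would expand both squared norms into the three standard terms. The quadratic term $\mathbb{E}[\|M_\theta\|_2^2]$ agrees in the two losses because $M_\theta(s,\mathbf{y},\mathbf{c})$ depends on the clean observation only through $\mathbf{y}$, and the marginal law of $\mathbf{y}$ under the joint $p(\mathbf{x}\mid\mathbf{c})\,p(\mathbf{y}\mid\mathbf{x},\mathbf{c})$ is exactly $p(\mathbf{y}\mid\mathbf{c})$. The two score-norm terms carry no dependence on $\theta$ and may be absorbed into a constant. The crux is therefore the cross term, and this is where the real work lies: I would use the elementary identity $p(\mathbf{y}\mid\mathbf{x},\mathbf{c})\,\nabla_{\mathbf{y}}\log p(\mathbf{y}\mid\mathbf{x},\mathbf{c})=\nabla_{\mathbf{y}}p(\mathbf{y}\mid\mathbf{x},\mathbf{c})$, then interchange $\nabla_{\mathbf{y}}$ with the integral over $\mathbf{x}$ to obtain $\int p(\mathbf{x}\mid\mathbf{c})\,\nabla_{\mathbf{y}}p(\mathbf{y}\mid\mathbf{x},\mathbf{c})\,d\mathbf{x}=\nabla_{\mathbf{y}}p(\mathbf{y}\mid\mathbf{c})=p(\mathbf{y}\mid\mathbf{c})\,\nabla_{\mathbf{y}}\log p(\mathbf{y}\mid\mathbf{c})$. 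Substituting back shows that the cross term of $l^\star_2$ equals that of $l_1$.

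Combining the three terms gives $l_1(n,s)=l^\star_2(n,s)+C(n,s,\mathbf{c})$ with $C$ independent of $\theta$; hence the two losses share the same $\theta$-gradient and minimizer, which is the precise meaning of ``can be replaced.'' Finally I would lift the pointwise-in-$\mathbf{c}$ identity back through the outer operators: taking $\mathbb{E}_{\mathbf{x}_{1:n-1}^0}$ (supplied by $\mathbb{E}_{\mathbf{x}_{1:N}^0}$, since neither $l_1$ nor the inner part of $l^\star_2$ depends on the suffix $\mathbf{x}_{n+1:N}^0$), multiplying by $\lambda(s)$, summing over $n$, and taking $\mathbb{E}_s$, yields $L_1=L_2$ up to a $\theta$-independent constant, establishing the stated replacement.

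The main obstacle I anticipate is bookkeeping rather than conceptual: keeping the history $\mathbf{c}$ frozen consistently throughout, and justifying the interchange of $\nabla_{\mathbf{y}}$ with $\int\! d\mathbf{x}$. The latter is legitimate because the transition kernel $p(\mathbf{y}\mid\mathbf{x},\mathbf{c})$ is Gaussian with smooth, integrable-derivative dependence on $\mathbf{y}$ (so differentiation under the integral sign applies), but it should be stated rather than silently assumed. A secondary subtlety is being explicit that the ``$=$'' in the lemma denotes equality of optimizers, i.e.\ equality up to an additive constant in $\theta$, rather than literal equality of the loss values.
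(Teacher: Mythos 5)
Your proposal is correct and follows essentially the same route as the paper's proof: both expand the squared norm, show the quadratic term and the cross term coincide under the two sampling schemes by marginalizing over $\mathbf{x}_n^0$ and using $p\,\nabla\log p=\nabla p$ together with the factorization $p(\mathbf{x}_{1:n}^s,\mathbf{x}_{1:n}^0)=p(\mathbf{x}_{1:n}^s\mid\mathbf{x}_{1:n}^0)\,p(\mathbf{x}_{1:n}^0)$, and absorb the $\theta$-independent score norms into a constant. Your framing as a ``conditional Vincent'' identity with the history frozen is just a cleaner packaging of the same computation, and you are in fact more careful than the paper on the two points it leaves implicit (justifying differentiation under the integral sign, and stating that the equality holds only up to an additive constant in $\theta$).
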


\begin{proof}
At first, if $n=1$, it can be substituted with the naive denoising score loss by~\cite{vincent2011matching} since $\textbf{x}_0^0 = \textbf{0}$. 

Next, let us consider $n>1$. $l_{1}(n,s)$ can be decomposed as follows:
\begin{equation*}
\begin{split}
l_{1}(n,s) = -2\cdot\mathbb{E}_{{\textbf{x}}_{1:n}^s}\langle M_{\theta}(s,{\textbf{x}}_{1:n}^s,\textbf{x}_{1:n-1}^0),{\nabla}_{{\textbf{x}}_{1:n}^s}\log p({\textbf{x}}_{1:n}^s|\textbf{x}_{1:n-1}^0)\rangle  + \mathbb{E}_{{\textbf{x}}_{1:n}^s}\left[\left\|M_{\theta}(s,{\textbf{x}}_{1:n}^s,\textbf{x}_{1:n-1}^0)\right\|_2^2\right]+C_1  
\end{split}
\end{equation*}

% Here, $C_1$ is constant that do not affect parameter $\theta$, and $\langle\cdot,\cdot\rangle$ is inner product. 
Here, $C_1$ is a constant that does not depend on the parameter $\theta$, and $\langle\cdot,\cdot\rangle$ means the inner product. Then, the first part's expectation of the right-hand side can be expressed as follows:
\begin{equation*}
\footnotesize
\begin{split}
\mathbb{E}_{{\textbf{x}}_{1:n}^s}[\langle M_{\theta}(s,{\textbf{x}}_{1:n}^s,\textbf{x}_{1:n-1}^0),{\nabla}_{{\textbf{x}}_{1:n}^s}\log p({\textbf{x}}_{1:n}^s|\textbf{x}_{1:n-1}^0)\rangle ]
=\int_{{\textbf{x}}_{1:n}^s}\langle M_{\theta}(s,{\textbf{x}}_{1:n}^s,\textbf{x}_{1:n-1}^0),{\nabla}_{{\textbf{x}}_{1:n}^s}\log p({\textbf{x}}_{1:n}^s|\textbf{x}_{1:n-1}^0)\rangle\text{p}({\textbf{x}}_{1:n}^s|\textbf{x}_{1:n-1}^0)d{\textbf{x}}_{1:n}^s
\\
=\int_{{\textbf{x}}_{1:n}^s}\langle M_{\theta}(s,{\textbf{x}}_{1:n}^s,\textbf{x}_{1:n-1}^0),\frac{1}{\text{p}(\textbf{x}_{1:n-1}^0)}{\partial\text{p}({\textbf{x}}_{1:n}^s,\textbf{x}_{1:n-1}^0)\over\partial {\textbf{x}}_{1:n}^s}\rangle d{\textbf{x}}_{1:n}^s
=\int_{{\textbf{x}}_{n}^0}\int_{{\textbf{x}}_{1:n}^s}\langle M_{\theta}(s,{\textbf{x}}_{1:n}^s,\textbf{x}_{1:n-1}^0),\frac{1}{\text{p}(\textbf{x}_{1:n-1}^0)}{\partial\text{p}({\textbf{x}}_{1:n}^s,\textbf{x}_{1:n-1}^0,\textbf{x}_n^0)\over\partial {\textbf{x}}_{1:n}^s}\rangle d{\textbf{x}}_{1:n}^sd\textbf{x}_n^0
\\
=\int_{{\textbf{x}}_{n}^0}\int_{{\textbf{x}}_{1:n}^s}\langle M_{\theta}(s,{\textbf{x}}_{1:n}^s,\textbf{x}_{1:n-1}^0),{\partial\text{p}({\textbf{x}}_{1:n}^s|\textbf{x}_{1:n}^0))\over\partial {\textbf{x}}_{1:n}^s}\rangle\frac{\text{p}(\textbf{x}_{1:n-1}^0,\textbf{x}_n^0)}{\text{p}(\textbf{x}_{1:n-1}^0)}d{\textbf{x}}_{1:n}^sd\textbf{x}_n^0
=\int_{{\textbf{x}}_{n}^0}\int_{{\textbf{x}}_{1:n}^s}\langle M_{\theta}(s,{\textbf{x}}_{1:n}^s,\textbf{x}_{1:n-1}^0),{\partial\text{p}({\textbf{x}}_{1:n}^s|\textbf{x}_{1:n}^0)\over\partial {\textbf{x}}_{1:n}^s}\rangle\text{p}(\textbf{x}_n^0|\textbf{x}_{1:n-1}^0)d{\textbf{x}}_{1:n}^sd\textbf{x}_n^0
\\
=\mathbb{E}_{\textbf{x}_n^0}\left[\int_{{\textbf{x}}_{1:n}^s}\langle M_{\theta}(s,{\textbf{x}}_{1:n}^s,\textbf{x}_{1:n-1}^0),{\partial\text{p}({\textbf{x}}_{1:n}^s|\textbf{x}_{1:n}^0)\over\partial {\textbf{x}}_{1:n}^s}\rangle d{\textbf{x}}_{1:n}^s\right]
=\mathbb{E}_{\textbf{x}_n^0}\left[\int_{{\textbf{x}}_{1:n}^s}\langle M_{\theta}(s,{\textbf{x}}_{1:n}^s,\textbf{x}_{1:n-1}^0),{\nabla}_{{\textbf{x}}_{1:n}^s}\log p({\textbf{x}}_{1:n}^s|\textbf{x}_{1:n}^0)\rangle\text{p}({\textbf{x}}_{1:n}^s|\textbf{x}_{1:n}^0)d{\textbf{x}}_{1:n}^s\right]
\\
=\mathbb{E}_{\textbf{x}_n^0}\mathbb{E}_{{\textbf{x}}_{1:n}^s}[\langle M_{\theta}(s,{\textbf{x}}_{1:n}^s,\textbf{x}_{1:n-1}^0),{\nabla}_{{\textbf{x}}_{1:n}^s}\log p({\textbf{x}}_{1:n}^s|\textbf{x}_{1:n}^0)\rangle]
\end{split}
\end{equation*}

% Note that since we use RNN-based pre-trained encoder, we can consider $h_{t}\sim x_{1:t}$. Then we use Markov property to latent variables on the above calculation.

% Note that since we use RNN-based pre-trained encoder, we use following equivalence on the above expression: p$(\cdot,h_{t-1},h_t)$= p$(\cdot,x_{1:t-1},x_{1:t})$= p$(\cdot,x_{1:t})$= p$(\cdot,h_t)$, so p$(\cdot,h_t|h_{t-1})$= p$(\cdot,h_t)$/p$(h_{t-1})$ for whatever possible input.

Similarly, the second part's expectation of the right-hand side can be rewritten as follows:
\begin{equation*}
\footnotesize
\begin{split}
\mathbb{E}_{{\textbf{x}}_{1:n}^s}[\left\|M_{\theta}(s,{\textbf{x}}_{1:n}^s,\textbf{x}_{1:n-1}^0)\right\|_2^2]
=\int_{{\textbf{x}}_{1:n}^s}\left\|M_{\theta}(s,{\textbf{x}}_{1:n}^s,\textbf{x}_{1:n-1}^0)\right\|_2^2\cdot\text{p}({\textbf{x}}_{1:n}^s|\textbf{x}_{1:n-1}^0)d{\textbf{x}}_{1:n}^s\\
=\int_{\textbf{x}_n^0}\int_{{\textbf{x}}_{1:n}^s}\left\|M_{\theta}(s,{\textbf{x}}_{1:n}^s,\textbf{x}_{1:n-1}^0)\right\|_2^2\cdot\frac{\text{p}({\textbf{x}}_{1:n}^s,\textbf{x}_{1:n-1}^0,\textbf{x}_n^0)}{\text{p}(\textbf{x}_{1:n-1}^0)}d{\textbf{x}}_{1:n}^sd\textbf{x}_n^0
=\int_{\textbf{x}_n^0}\int_{{\textbf{x}}_{1:n}^s}\left\|M_{\theta}(s,{\textbf{x}}_{1:n}^s,\textbf{x}_{1:n-1}^0)\right\|_2^2\cdot\text{p}({\textbf{x}}_{1:n}^s|\textbf{x}_{1:n}^0)\frac{\text{p}(\textbf{x}_{1:n-1}^0,\textbf{x}_n^0)}{\text{p}(\textbf{x}_{1:n-1}^0)}d{\textbf{x}}_{1:n}^sd\textbf{x}_n^0\\
=\int_{\textbf{x}_n^0}\int_{{\textbf{x}}_{1:n}^s}\left\|M_{\theta}(s,{\textbf{x}}_{1:n}^s,\textbf{x}_{1:n-1}^0)\right\|_2^2\cdot\text{p}({\textbf{x}}_{1:n}^s|\textbf{x}_{1:n}^0)\text{p}(\textbf{x}_n^0|\textbf{x}_{1:n-1}^0)d{\textbf{x}}_{1:n}^sd\textbf{x}_n^0
=\mathbb{E}_{{\textbf{x}}_{n}^0}\mathbb{E}_{{\textbf{x}}_{1:n}^s}[\left\|M_{\theta}(s,{\textbf{x}}_{1:n}^s,\textbf{x}_{1:n-1}^0)\right\|_2^2] \cr
\end{split}
\end{equation*}

Finally, by using above results, we can derive following result:
\begin{equation*}
\begin{split}
l_{1} = \mathbb{E}_{{\textbf{x}}_{n}^0}\mathbb{E}_{{\textbf{x}}_{1:n}^s}\left[\left\|M_{\theta}(s,{\textbf{x}}_{1:n}^s,\textbf{x}_{1:n-1}^0)\right\|_2^2\right]+C_1
-2\cdot\mathbb{E}_{{\textbf{x}}_{n}^0}\mathbb{E}_{{\textbf{x}}_{1:n}^s}\langle M_{\theta}(s,{\textbf{x}}_{1:n}^s,\textbf{x}_{1:n-1}^0),{\nabla}_{{\textbf{x}}_{1:n}^s}\log p({\textbf{x}}_{1:n}^s|\textbf{x}_{1:n}^0)\rangle\\
=\mathbb{E}_{\textbf{x}_n^0}\mathbb{E}_{{\textbf{x}}_{1:n}^s}\left[\left\|M_{\theta}(s,{\textbf{x}}_{1:n}^s,\textbf{x}_{1:n-1}^0)-{\nabla}_{{\textbf{x}}_{1:n}^s}\log p({\textbf{x}}_{1:n}^s|\textbf{x}_{1:n}^0)\right\|_2^2 \right]+C
\cr
\end{split}
\end{equation*}

$C$ is a constant that does not depend on the parameter $\theta$. 
\end{proof}

\setcounter{theorem}{0}
\begin{theorem}[Autoregressive denoising score matching] $l_{1}(n,s)$ can be replaced with the following $l_{2}(n,s)$
%Our target objective function, $L_{score}$, is defined as follows:
\begin{equation*}
L_{score} = \mathbb{E}_{s}\mathbb{E}_{\textbf{x}_{1:N}^0}\left[\sum_{n=1}^{N}\lambda(s)l_2(n, s) \right],
\end{equation*}where
\begin{equation*}
l_{2}(n,s) = \mathbb{E}_{{\textbf{\textsc{x}}}_{1:n}^s}\left[\left\|M_{\theta}(s,{\textbf{\textsc{x}}}_{1:n}^s,\textbf{\textsc{x}}_{1:n-1}^0)-{\nabla}_{{\textbf{\textsc{x}}}_{1:n}^s}\log p({\textbf{\textsc{x}}}_{1:n}^s|\textbf{\textsc{x}}_{1:n}^0)\right\|_2^2\right].
\end{equation*}
Then, $L_{1}=L_{score}$ is satisfied.
\end{theorem}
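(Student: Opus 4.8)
The plan is to chain the already-proved Lemma~\ref{lemma1} with a single application of the tower property of conditional expectation. Lemma~\ref{lemma1} shows that, inside $L_1$, each summand $l_1(n,s)$ may be replaced by
\[
l_2^\star(n,s)=\mathbb{E}_{\mathbf{x}_n^0}\mathbb{E}_{\mathbf{x}_{1:n}^s}\Big[\,\|M_\theta(s,\mathbf{x}_{1:n}^s,\mathbf{x}_{1:n-1}^0)-\nabla_{\mathbf{x}_{1:n}^s}\log p(\mathbf{x}_{1:n}^s\mid\mathbf{x}_{1:n}^0)\|_2^2\,\Big],
\]
with $\mathbf{x}_n^0\sim p(\mathbf{x}_n^0\mid\mathbf{x}_{1:n-1}^0)$ and $\mathbf{x}_{1:n}^s\sim p(\mathbf{x}_{1:n}^s\mid\mathbf{x}_{1:n}^0)$, so that $L_1=L_2$ up to an additive constant independent of $\theta$ (expanding the squared norms, $l_1$ and $l_2^\star$ share the $\|M_\theta\|_2^2$ and cross terms, and differ only by $\|\nabla\log p(\cdot\mid\mathbf{x}_{1:n-1}^0)\|_2^2-\|\nabla\log p(\cdot\mid\mathbf{x}_{1:n}^0)\|_2^2$, which does not involve $M_\theta$). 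It therefore remains only to prove $L_2=L_{score}$.

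The key observation is the difference in the conditioning structure of $l_2^\star$ and $l_2$. After integrating out $\mathbf{x}_n^0$ and $\mathbf{x}_{1:n}^s$, the quantity $l_2^\star(n,s)$ is a function of $\mathbf{x}_{1:n-1}^0$ alone, whereas $l_2(n,s)$ is a function of $\mathbf{x}_{1:n}^0$. I would fix $n$, use linearity to bring the outer $\mathbb{E}_{\mathbf{x}_{1:N}^0}$ inside the sum, and note that $\mathbb{E}_{\mathbf{x}_{1:N}^0}[l_2^\star(n,s)]=\mathbb{E}_{\mathbf{x}_{1:n-1}^0}[l_2^\star(n,s)]$ since the suffix $\mathbf{x}_{n:N}^0$ never appears in $l_2^\star(n,s)$. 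Writing out the definition gives
\[
\mathbb{E}_{\mathbf{x}_{1:n-1}^0}[l_2^\star(n,s)]=\mathbb{E}_{\mathbf{x}_{1:n-1}^0}\,\mathbb{E}_{\mathbf{x}_n^0\mid\mathbf{x}_{1:n-1}^0}\,\mathbb{E}_{\mathbf{x}_{1:n}^s\mid\mathbf{x}_{1:n}^0}\big[\,\|\cdots\|_2^2\,\big].
\]
By the tower property, $\mathbb{E}_{\mathbf{x}_{1:n-1}^0}\mathbb{E}_{\mathbf{x}_n^0\mid\mathbf{x}_{1:n-1}^0}=\mathbb{E}_{\mathbf{x}_{1:n}^0}$, so the right-hand side collapses to $\mathbb{E}_{\mathbf{x}_{1:n}^0}\mathbb{E}_{\mathbf{x}_{1:n}^s\mid\mathbf{x}_{1:n}^0}[\|\cdots\|_2^2]=\mathbb{E}_{\mathbf{x}_{1:n}^0}[l_2(n,s)]=\mathbb{E}_{\mathbf{x}_{1:N}^0}[l_2(n,s)]$, the last step again because $l_2(n,s)$ depends only on $\mathbf{x}_{1:n}^0$.

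Summing over $n$ and restoring $\mathbb{E}_s$ together with the weights $\lambda(s)$ then yields $L_2=L_{score}$, and combining with $L_1=L_2$ completes the argument. The step I expect to require the most care is the bookkeeping of the conditioning: I must check that $\mathbf{x}_{1:n}^s$ is drawn from the same transition kernel $p(\mathbf{x}_{1:n}^s\mid\mathbf{x}_{1:n}^0)$ in both $l_2^\star$ and $l_2$, and that re-sampling $\mathbf{x}_n^0$ given $\mathbf{x}_{1:n-1}^0$ is exactly consistent with the joint law of $\mathbf{x}_{1:n}^0$ under $\mathbb{E}_{\mathbf{x}_{1:N}^0}$, so that the tower property applies with no mismatch. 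I would also state explicitly that $L_1=L_{score}$ is meant up to a $\theta$-independent constant, which is all that is needed since the two objectives then share the same minimizing $\theta$.
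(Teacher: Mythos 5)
Your proposal is correct and follows essentially the same route as the paper: invoke Lemma~\ref{lemma1} to pass from $L_1$ to $L_2$, then show $\mathbb{E}_{\mathbf{x}_{1:N}^0}\mathbb{E}_{\mathbf{x}_n^0\mid\mathbf{x}_{1:n-1}^0}[f(\mathbf{x}_{1:n}^0)]=\mathbb{E}_{\mathbf{x}_{1:N}^0}[f(\mathbf{x}_{1:n}^0)]$ term by term and sum over $n$; the paper carries out this marginalization by writing the integrals explicitly, while you cite the tower property, an alternative the paper itself acknowledges as equivalent. Your added remark that the identity holds up to a $\theta$-independent additive constant (inherited from Lemma~\ref{lemma1}) is a point the paper leaves implicit, and is the right way to read the claim.
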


\textit{proof.} By Lemma~\ref{lemma1}, it suffices to show that $L_{2}=L_{score}$. Whereas one can use the law of total expectation, which means \textit{$E[X]=E[E[X|Y]]$ if X,Y are on an identical probability space} to show the above formula, we calculate directly. At first, let us simplify the expectation of the inner part with a symbol $f(\textbf{x}_{1:n}^0)$ for our computational convenience, i.e.,
$f(\textbf{x}_{1:n}^0)=\mathbb{E}_{s}\mathbb{E}_{{\textbf{x}}_{1:n}^s}\left[\lambda(s)\left\|M_{\theta}(s,{\textbf{x}}_{1:n}^s,\textbf{x}_{1:n-1}^0)-{\nabla}_{{\textbf{x}}_{1:n}^s}\log p({\textbf{x}}_{1:n}^s|\textbf{x}_{1:n}^0)\right\|_2^2\right]$. Then we have the following definition:
\begin{equation*}
L_{2} = \mathbb{E}_{s}\mathbb{E}_{\textbf{x}_{1:N}^0}\left[l_2^\star\right]
=\mathbb{E}_{\textbf{x}_{1:N}^0}\left[\sum_{n=1}^{N}\mathbb{E}_{{\textbf{x}}_{n}^0}[f(\textbf{x}_{1:n}^0)]\right]=\sum_{n=1}^{N}\mathbb{E}_{\textbf{x}_{1:N}^0}\mathbb{E}_{{\textbf{x}}_{n}^0}[f(\textbf{x}_{1:n}^0)]
\end{equation*}

At last, the expectation part can be further simplified as follows:
% In here, $\mathbb{E}_{h_{1:T}}[\cdot]=\mathbb{E}_{h_{t:T}}\mathbb{E}_{h_{1:t-1}}[\cdot]$ for $h_{1:t-1}\sim p(h_{1:t-1})$ and $h_{t:T}\sim p(h_{t:T}|h_{t-1})$. Then the expectation parts is
\begin{equation*}
\begin{split}
\mathbb{E}_{\textbf{x}_{1:N}^0}\mathbb{E}_{{\textbf{x}}_{n}^0}[f(\textbf{x}_{1:n}^0)]
=\int_{\textbf{x}_{1:N}^0}\int_{\textbf{x}_n^0}f(\textbf{x}_{1:n}^0)p(\textbf{x}_{n}^0|\textbf{x}_{1:n-1}^0)d\textbf{x}_n^0\cdot p(\textbf{x}_{1:n-1}^0)p(\textbf{x}_{n:N}^0|\textbf{x}_{1:n-1}^0)d\textbf{x}_{1:N}^0\\
=\int_{\textbf{x}_{1:N}^0}\int_{\textbf{x}_n^0}f(\textbf{x}_{1:n}^0)p(\textbf{x}_{1:n}^0)d\textbf{x}_n^0\cdot p(\textbf{x}_{n:N}^0|\textbf{x}_{1:n-1}^0)d\textbf{x}_{1:N}^0
=\int_{\textbf{x}_{n:N}^0}\left({\int_{\textbf{x}_{1:n}^0}}f(\textbf{x}_{1:n}^0)p(\textbf{x}_{1:n}^0)d\textbf{x}_{1:n}^0\right)p(\textbf{x}_{n:N}^0|\textbf{x}_{1:n-1}^0)d\textbf{x}_{n:N}^0\\
=\int_{\textbf{x}_{1:n}^0}f(\textbf{x}_{1:n}^0)p(\textbf{x}_{1:n}^0)d\textbf{x}_{1:n}^0
=\int_{\textbf{x}_{1:n}^0}\left(\int_{\textbf{x}_{n+1:N}^0}p(\textbf{x}_{n+1:N}^0|\textbf{x}_{1:n}^0)d\textbf{x}_{n+1:N}^0\right)f(\textbf{x}_{1:n}^0)p(\textbf{x}_{1:n}^0)d\textbf{x}_{1:n}^0\\
=\int_{\textbf{x}_{1:N}^0}f(\textbf{x}_{1:n}^0)p(\textbf{x}_{1:N}^0)d\textbf{x}_{1:N}^0
=\mathbb{E}_{\textbf{x}_{1:N}^0}[f(\textbf{x}_{1:n}^0)] \\
\end{split}
\end{equation*}

% Someone may use law of total expectation, which means \textit{$E[X]=E[E[X|Y]]$ if X,Y are on same probability space} to show the above formula, but we calculate directly. 

Since $\sum_{n=1}^N\mathbb{E}_{\textbf{x}_{1:N}^0}[f(\textbf{x}_{1:n}^0)]=\mathbb{E}_{\textbf{x}_{1:N}^0}[\sum_{n=1}^N f(\textbf{x}_{1:n}^0)]=L_{score}$, we prove the theorem. \hfill $\square$

\section{Existing Time-Series Diffusion Models}\label{appen:score_other}

There exist time-series diffusion models for forecasting and imputation~\cite{rasul2021timegrad,tashiro2021csdi}. However, our approach to time-series synthesis is technically distinct. While these models aim to generate fulfilled samples given partially known time-series, TSGM focuses on producing diverse samples. Furthermore, TSGM is designed to provide diverse regular time-series given regular or irregular data unlike time-series forecasting, which predicts future observations based on past data, and time-series imputation, which infers missing elements within a given time-series sample. This intuition is reflected in the experimental results in Section~\ref{appen:exist_score_inappp}. 

We refer to a detailed analysis of these experiments in Section~\ref{analy}.

\subsection{Diffusion Models for Time-series Forecasting and Imputation}\label{sec:difftime}

TimeGrad~\cite{rasul2021timegrad} is a diffusion-based method for time-series forecasting, and CSDI~\cite{tashiro2021csdi} is for time-series imputation.

In TimeGrad~\cite{rasul2021timegrad}, they used a diffusion model for forecasting future observations given past observations. On each sequential order $n \in \{2,...,N\}$ and diffusion step $s \in \{1,...,T\}$, they train a neural network $\boldsymbol{\epsilon}_{\theta}(\cdot,\mathbf{x}_{1:n-1},s)$ with a time-dependent diffusion coefficient $\bar{\alpha}_s$ by minimizing the following objective function:
\begin{equation*}
    \mathbb{E}_{\textbf{x}_{n}^0,\epsilon,s} [\left\|\epsilon - \boldsymbol{\epsilon}_{\theta}(\sqrt{\bar{\alpha}_s}\mathbf{x}_n^0+\sqrt{1-\bar{\alpha}_s}\boldsymbol{\epsilon},\mathbf{x}_{1:n-1},s)\right\|_2^2],
\end{equation*}where $\boldsymbol{\epsilon} \sim \mathcal{N}(\boldsymbol{0}, \boldsymbol{I})$. The above formula assumes that we already know $\textbf{x}_{1:n-1}$, and by using an RNN encoder, $\textbf{x}_{1:n-1}$ can be encoded into $\textbf{h}_{n-1}$. After training, the model forecasts future observations recursively. More precisely speaking, $\textbf{x}_{1:n-1}$ is encoded into $\textbf{h}_{n-1}$ and the next observation $\textbf{x}_{n}$ is forecast from the previous condition $\textbf{h}_{n-1}$.

CSDI~\cite{tashiro2021csdi} proposed a general diffusion framework which can be applied mainly to time-series imputation. CSDI reconstructs an entire sequence at once, not recursively. Let $\textbf{x}^0 \in \mathbb{R}^{\dim(\textbf{X}) \times N}$ be an entire time-series sequence with $N$ observations in a matrix form. They define $\textbf{x}_{co}^0$ and $\textbf{x}_{ta}^0$ as conditions and imputation targets which are derived from $\textbf{x}^0$, respectively. They then train a neural network $\epsilon_{\theta}(\cdot,\textbf{x}_{co}^0,s)$ with a corresponding diffusion coefficient $\bar{\alpha}_s$ and a diffusion step $s \in \{1,...,T\}$ by minimizing the following objective function:
\begin{equation*}
    \mathbb{E}_{\textbf{x}^0,\epsilon,s}[\left\|\epsilon-\epsilon_{\theta}(s,\textbf{x}_{ta}^s,\textbf{x}_{co}^0)\right\|_2^2],
\end{equation*}
\noindent where $\textbf{x}_{ta}^s=\sqrt{\bar{\alpha}_s}\textbf{x}_{ta}^0+(1-\bar{\alpha}_s)\epsilon$. By training the network using the above loss, it generates missing elements from the partially filled matrix $\textbf{x}_{co}^0$. 

% {\color{red} Note that these diffusion models differ significantly from our TSGM because their primary task is forecasting or imputation, rather than generating time series. In Appendix~\ref{appen:score_other}, we provide a detailed explanation of the differences between our models and the diffusion model for time-series forecasting and imputation.}

\subsection{Difference between Existing and Our Works}
Although they have earned state-of-the-art results for forecasting and imputation, we found that they are not suitable for our generative task due to the fundamental mismatch between their model designs and our task (cf. Table~\ref{table4} and Fig.~\ref{fig3}). 

\begin{table}[h]
\centering
% \vskip 0.1in
\setlength{\tabcolsep}{10pt}
\renewcommand{\arraystretch}{1.0}%#{1.4}
\begin{tabular}{c|c|c}
    \hline
    Method & Type  & Task Description \\
    \hline
    TimeGrad & Diffusion & From $\textbf{x}_{1:N-K}$, infer $\hat{\textbf{x}}_{N-K+1:N}$.\\
    % ScoreGrad & Forecasting & Energy & ${\nabla}\log p(\textbf{x}_t^s|\textbf{x}_t)$\\
    CSDI & Diffusion & Given known values $\textbf{x}_{co}$, infer missing values $\hat{\textbf{x}}_{ta}$.\\
    \hline
    TimeGAN & GAN & Synthesize $\hat{\textbf{x}}_{1:N}$ from scratch.\\
    GT-GAN & GAN & Synthesize $\hat{\textbf{x}}_{1:N}$ from scratch.\\
    \hline
    TSGM & SGM & Synthesize $\hat{\textbf{x}}_{1:N}$ from scratch.\\
    \hline
\end{tabular}
\caption{Comparison among various recent GAN, diffusion, and SGM-based methods for time-series. $\textbf{x}_t$ (resp. $\hat{\textbf{x}}_t$) means a raw (resp. synthesized) observation at time $t$. For CSDI, $\textbf{x}_{co}$ means a set of known values and $\textbf{x}_{ta}$ means a set of target missing values --- it is not necessary that $\textbf{x}_{co}$ precedes $\textbf{x}_{ta}$ in time in CSDI.}
\label{table4}
% \vskip -0.15in
\end{table}

\begin{figure}[t]
% \vskip 0.2in
\centering
\includegraphics[width=1.0\columnwidth]{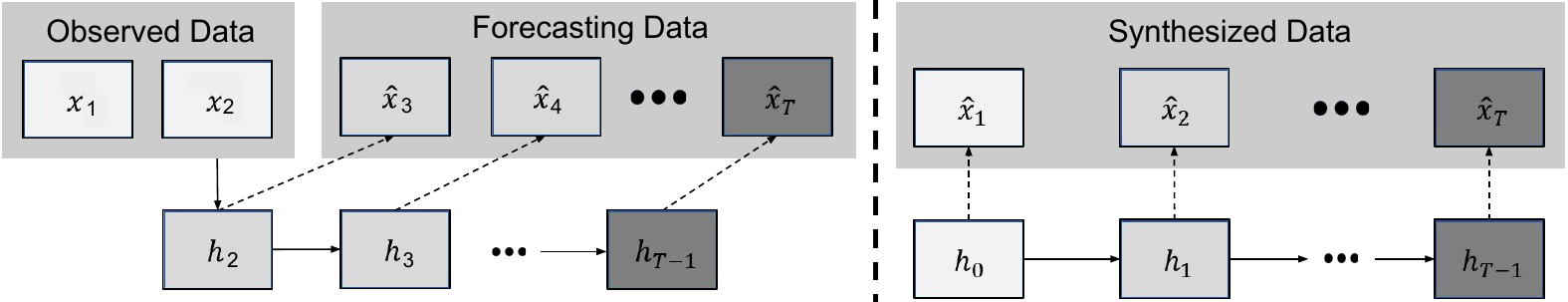} % Reduce the figure size so that it is slightly narrower than the column.
\caption{Graphical representation of TimeGrad (left) and TSGM (right). We adapt TimeGrad to our generation task but its results are not comparable even to other baselines' results (see Appendix~\ref{appen:adapt}).}
\label{fig3}
% \vskip -0.15in
\end{figure}

TimeGrad generates future observations given the hidden representation of past observations $\textbf{h}_{n-1}$, i.e., a typical forecasting problem. Since our task is to synthesize from scratch, past known observations are not available. Thus, TimeGrad cannot be directly applied to our task. 

% However, it can be somehow modified for generating from scratch but its generation quality is mediocre (see Appendix~\ref{appen:adapt}).

In CSDI, there are no fixed temporal dependencies between $\textbf{x}_{co}^0$ and $\textbf{x}_{ta}^0$ since its task is to impute missing values, i.e., $\textbf{x}_{ta}^0$, from known values, i.e., $\textbf{x}_{co}^0$, in the matrix $\textbf{x}^0$. It is not necessary that $\textbf{x}_{co}^0$ precedes $\textbf{x}_{ta}^0$ in time, according to the CSDI's method design. Our synthesis task can be considered as $\textbf{x}_{co}^0 = \emptyset$, which is the most extreme case of the CSDI's task. Therefore, it is not suitable to be used for our task.

To our knowledge, we are the first proposing an SGM-based time-series synthesis method. We propose to train a conditional score network by using the denoising score matching loss proposed by us, which is denoted as $L_{score}^{\mathcal{H}}$. Unlike other methods~\cite{rasul2021timegrad,tashiro2021csdi} that resort to existing known proofs, we design our denoising score matching loss in Eq.~\eqref{eq:ourscore} and prove its correctness. Meanwhile, TimeGrad and CSDI can be somehow modified for time-series synthesis but their generation quality is mediocre (see Appendix~\ref{appen:exist_score_inappp}).

% In addition, TimeGrad and CSDI are all diffusion models and their training is for predicting the noise $\boldsymbol{\epsilon}$, which is quite different from our SGM-based method and its denoising score matching training. To our knowledge, we are the first proposing an SGM-based time-series synthesis method. We propose to train a conditional score network by using the denoising score matching loss 
 % proposed by us, which is denoted as $L_{score}^{\mathcal{H}}$.
 
% :
% \begin{equation}
%     \mathbb{E}_{s}\mathbb{E}_{\textbf{h}_{1:N}}\sum_{n=1}^N \left\|M_{\theta}(s,{\textbf{h}}_{n}^s,\textbf{h}_{n-1})-{\nabla}_{\textbf{h}_{n}^s}\log p(\textbf{h}_{n}^s|\textbf{h}_{n})\right\|_2^2.
% \end{equation}
% We set $\textbf{h}_0 = \textbf{0}$.
% Unlike other methods~\cite{rasul2021timegrad,tashiro2021csdi} that resort to existing known proofs, we design our denoising score matching loss in Eq.~\eqref{eq:ourscore} and prove its correctness.

\section{Experimental Results for Inapplicability of Existing Time-Series Diffusion Models to Our Work}\label{appen:exist_score_inappp}

In this section, we provide experimental results to show inapplicability of the existing time-series diffusion models, TimeGrad and CSDI, to the time-series generation task.

\subsection{Adapting TimeGrad toward Generation Task}\label{appen:adapt}
In this section, TimeGrad~\cite{rasul2021timegrad} is modified for the generation task. We simply add an artificial zero vector $\mathbf{0}$ in front of the all time-series samples of Energy. Therefore, TimeGrad's task becomes given a zero vector, forecasting (or generating) all other remaining observations. For the stochastic nature of its forecasting process, it can somehow generate various next observations given the sample input $\mathbf{0}$. Table~\ref{tbl:adapt} shows the experimental comparison between modified TimeGrad and TSGM in Energy for its regular time-series setting. TSGM gives outstanding performance, compared to modified TimeGrad. When checked in Table~\ref{table2_1}, modified TimeGrad is even worse than some baselines. Therefore, unlike TSGM, TimeGrad is not appropriate for the generation task.

\begin{table}
\centering
%\resizebox{.95\columnwidth}{!}{
% \vskip 0.15in
% \renewcommand{\arraystretch}{1.0}%#{1.4}
\begin{tabular}{c|c|c}
    \hline
    Method & Disc. & Pred. \\
    \hline
     TSGM-VP    & .221±.025 & .257±.000 \\
     TSGM-subVP & .198±.025 & .252±.000 \\
    
    \hline
    
     Modified TimeGrad  & .500±.000 & .287±.003 \\
                              % &  & .500±.000 & .300±.011 \\
    
    \hline
\end{tabular}
\caption{ Comparison between TSGM and modified TimeGrad in Energy for its regular time-series setting}\label{tbl:adapt}
% \vskip -0.1in
\end{table}

\subsection{Adapting CSDI toward Generation Task} \label{sec:oneshot}

In this section, we apply CSDI to the unconditional time-series generation task by regarding all observations as missing values (i.e., $\textbf{x}_{co}^0=\mathbf{0}$) varying i) its kernel size from 1 to 7 and ii) the number of diffusion steps from 50 to 250. However, as demonstrated in Table~\ref{tbl:csdi}, CSDI fails to generate reliable time series samples in the datasets for its regular time series setting. In particular, TSGM with 250 steps in the ablation study section significantly outperforms it. Hence, we conclude that CSDI's unconditional generation is unsuitable for the time-series generation task.

\subsection{Analysis on Experimental Results}\label{analy} Until now, we have demonstrated the inferior results of forecasting and imputation on time-series generation. As shown by the results, these two models achieve relatively good predictive score but poor discriminative score, indicating a lack of diversity. This is because the primary goal of imputation and forecasting is to generate precise values that closely match the ground truth, as we discussed in Section~\ref{appen:score_other}. For example, CSDI takes its imputed time-series by averaging synthesized samples 100 times. Therefore, as the quality of forecasting and imputation improves, the diversity of the generated samples decreases, which means they are not suitable for time-series generation.

\begin{table}
\centering
\renewcommand{\arraystretch}{1.0}
\setlength{\tabcolsep}{4pt}
\small
\scalebox{0.9}{
    \begin{tabular}{c|cc|cc|cc|cc}
    \toprule
        \multirow{2}{*}{Method} & \multicolumn{2}{c|}{Stock} & \multicolumn{2}{c|}{Air} & \multicolumn{2}{c|}{Energy} & \multicolumn{2}{c}{AI4I} \\ 
        ~ & Disc. & Pred. & Disc. & Pred. & Disc. & Pred. & Disc. & Pred. \\  \midrule
        TSGM-VP & .022$\pm$.005 & \textbf{.037$\pm$.000} & \textbf{.122$\pm$.014} & \textbf{.005$\pm$.000} & \textbf{.147$\pm$.005} & \textbf{.217$\pm$.000} & .221$\pm$.025 & .257$\pm$.000 \\ 
        TSGM-subVP & \textbf{.021$\pm$.008} & \textbf{.037$\pm$.000} & .127$\pm$.010 & \textbf{.005$\pm$.000} & .150$\pm$.010 & \textbf{.217$\pm$.000} & \textbf{.198$\pm$.025} & .252$\pm$.000 \\ 
        \midrule
        Modified CSDI & .379$\pm$.008 & .045$\pm$.001 & .437$\pm$.144 & .040$\pm$.001 & .427$\pm$.081 & .217$\pm$.000 & .500$\pm$.000 & \textbf{.251$\pm$.000} \\ \bottomrule
    \end{tabular}}
\caption{Comparison between TSGM and modified CSDI in Stock, Air, Energy and AI4I for its regular time-series setting}
\label{tbl:csdi}
\end{table}

\section{Detailed Description of GRU-D}\label{app:grud}
GRU-D~\cite{Che2016RecurrentNN} is a modified GRU model which is for learning time-series data with missing values. This concept is similar with our problem statement, so we apply it to our baseline for irregular case. GRU-D needs to learn decaying rates along with the values of GRU.
First, GRU-D learns decay rates which depict vagueness of data as time passed. After calculating the decay rates, each value is composed of decay rate, mask, latest observed data, and predicted empirical mean that of GRU. The code can be utilized in the following link: https://github.com/zhiyongc/GRU-D

\end{document}